\documentclass[10pt,twocolumn,letterpaper]{article}

\usepackage{iccv}
\usepackage{times}
\usepackage{epsfig}
\usepackage{graphicx}
\usepackage{amsmath}
\usepackage{amssymb}
\usepackage[utf8]{inputenc} % allow utf-8 input
\usepackage{url}
\usepackage{booktabs}       % professional-quality tables
\usepackage{amsfonts}       % blackboard math symbols
\usepackage{nicefrac}       % compact symbols for 1/2, etc.
\usepackage{microtype}      % microtypography
\usepackage[dvipsnames]{xcolor}         % colors
\usepackage{algorithm}
\usepackage{algpseudocode}
\algrenewcommand\algorithmicrequire{\textbf{Input:}}
\algrenewcommand\algorithmicensure{\textbf{Output:}}
\usepackage{amsthm}
\newtheorem{proposition}{Proposition}
\usepackage{subcaption}
\usepackage{gensymb}
\usepackage{placeins}
\usepackage{multirow}
\usepackage{mathtools}
% Include other packages here, before hyperref.
%%%%% NEW MATH DEFINITIONS %%%%%

\usepackage{amsmath,amsfonts,bm}

% Mark sections of captions for referring to divisions of figures

% Highlight a newly defined term

% Figure reference, lower-case.

% Figure reference, capital. For start of sentence

% Section reference, lower-case.

% Section reference, capital.

% Reference to two sections.

% Reference to three sections.

% Reference to an equation, lower-case.
\def\eqref#1{equation~(\ref{#1})}
% Reference to an equation, upper case
\def\Eqref#1{Equ.~(\ref{#1})}
% A raw reference to an equation---avoid using if possible

% Reference to a chapter, lower-case.

% Reference to an equation, upper case.

% Reference to a range of chapters

% Reference to an algorithm, lower-case.

% Reference to an algorithm, upper case.

% Reference to a part, lower case

% Reference to a part, upper case

\def\1{\bm{1}}

% Random variables

% rm is already a command, just don't name any random variables m

% Random vectors

% Elements of random vectors

% Random matrices

% Elements of random matrices

% Vectors
\def\vzero{{\bm{0}}}

\def\vb{{\bm{b}}}

\def\vd{{\bm{d}}}

\def\vg{{\bm{g}}}

\def\vr{{\bm{r}}}
\def\vs{{\bm{s}}}
\def\vt{{\bm{t}}}

\def\vw{{\bm{w}}}
\def\vx{{\bm{x}}}

% Elements of vectors

% Matrix

\def\mD{{\bm{D}}}

\def\mG{{\bm{G}}}

\def\mI{{\bm{I}}}

\def\mM{{\bm{M}}}

\def\mS{{\bm{S}}}

\def\mU{{\bm{U}}}
\def\mV{{\bm{V}}}
\def\mW{{\bm{W}}}
\def\mX{{\bm{X}}}

\def\mSigma{{\bm{\Sigma}}}

% Tensor
\DeclareMathAlphabet{\mathsfit}{\encodingdefault}{\sfdefault}{m}{sl}
\SetMathAlphabet{\mathsfit}{bold}{\encodingdefault}{\sfdefault}{bx}{n}

% Graph

\def\gG{{\mathcal{G}}}

\def\gO{{\mathcal{O}}}

% Sets

% Don't use a set called E, because this would be the same as our symbol
% for expectation.

% Entries of a matrix

% entries of a tensor
% Same font as tensor, without \bm wrapper

% The true underlying data generating distribution

% The empirical distribution defined by the training set

% The model distribution

% Stochastic autoencoder distributions

 % Laplace distribution

\newcommand{\R}{\mathbb{R}}

\newcommand{\variability}{\mathrm{variability}}
\newcommand{\proj}{\mathrm{proj}}

\newcommand{\St}{\mathrm{St}}
\newcommand{\SO}{\mathrm{SO}}

% Wolfram Mathworld says $L^2$ is for function spaces and $\ell^2$ is for vectors
% But then they seem to use $L^2$ for vectors throughout the site, and so does
% wikipedia.

 % See usage in notation.tex. Chosen to match Daphne's book.

\DeclareMathOperator{\qf}{qf}

\usepackage[numbers]{natbib}

% If you comment hyperref and then uncomment it, you should delete
% egpaper.aux before re-running latex.  (Or just hit 'q' on the first latex
% run, let it finish, and you should be clear).
\usepackage[pagebackref=false,breaklinks=true,letterpaper=true,colorlinks,hidelinks,bookmarks=false]{hyperref}

\iccvfinalcopy % *** Uncomment this line for the final submission

 % *** Enter the ICCV Paper ID here

% Pages are numbered in submission mode, and unnumbered in camera-ready
% \ificcvfinal\pagestyle{empty}\fi

\begin{document}

%%%%%%%%% TITLE
\title{Get the Best of Both Worlds: Improving Accuracy and Transferability by Grassmann Class Representation}

\author{Haoqi Wang\(^2\)\textsuperscript{\textasteriskcentered}\quad Zhizhong Li\(^1\)\thanks{~Equal contribution. Work is done when Haoqi was at SenseTime.}\textsuperscript{\textasteriskcentered}~\quad Wayne Zhang\(^{13}\)\thanks{~Corresponding author: Wayne Zhang.}\\
\(^1\) SenseTime Research \quad \(^2\) EPFL, Lausanne, Switzerland\\
\(^3\) Guangdong Provincial Key Laboratory of Digital Grid Technology
\\
{\tt\small haoqi.wang@epfl.ch \quad \{lizz,wayne.zhang\}@sensetime.com}
}

% \author{First Author\\
% Institution1\\
% Institution1 address\\
% {\tt\small firstauthor@i1.org}
% % For a paper whose authors are all at the same institution,
% % omit the following lines up until the closing ``}''.
% % Additional authors and addresses can be added with ``\and'',
% % just like the second author.
% % To save space, use either the email address or home page, not both
% \and
% Second Author\\
% Institution2\\
% First line of institution2 address\\
% {\tt\small secondauthor@i2.org}
% }

\maketitle
% Remove page # from the first page of camera-ready.
\ificcvfinal\thispagestyle{empty}\fi

%%%%%%%%% ABSTRACT
\begin{abstract}
  We generalize the class vectors found in neural networks to linear subspaces (i.e.~points in the Grassmann manifold) and show that the Grassmann Class Representation (GCR) enables the simultaneous improvement in accuracy and feature transferability.
  In GCR, each class is a subspace and the logit is defined as the norm of the projection of a feature onto the class subspace.
  We integrate Riemannian SGD into deep learning frameworks such that class subspaces in a Grassmannian are jointly optimized with the rest model parameters.
  Compared to the vector form, the representative capability of subspaces is more powerful.
  We show that
  on ImageNet-1K, the top-1 error of ResNet50-D, ResNeXt50, Swin-T and Deit3-S are reduced by 5.6\%, 4.5\%, 3.0\% and 3.5\%, respectively.
  Subspaces also provide freedom for features to vary and we observed that the intra-class feature variability grows when the subspace dimension increases.
  Consequently, we found the quality of GCR features is better for downstream tasks.
  For ResNet50-D, the average linear transfer accuracy across 6 datasets improves from 77.98\% to 79.70\% compared to the strong baseline of vanilla softmax.
  For Swin-T, it improves from 81.5\% to 83.4\% and for Deit3, it improves from 73.8\% to 81.4\%.
  With these encouraging results, we believe that more applications could benefit from the Grassmann class representation.
  Code is released at \url{https://github.com/innerlee/GCR}.
\end{abstract}

%%%%%%%%% BODY TEXT

\section{Introduction}\label{sec:intro}

The scheme \texttt{deep feature}\(\rightarrow\)\texttt{fully-connected} \(\rightarrow\)\texttt{softmax}\(\rightarrow\)\texttt{cross-entropy loss} has been the standard practice in deep classification networks.
Columns of the weight parameter in the fully-connected layer are the class representative vectors and serve as the prototype for classes.
The vector class representation has achieved huge success, yet it is not without imperfections.
In the study of transferable features, researchers noticed a dilemma that representations with higher classification accuracy lead to less transferable features for downstream tasks~\cite{kornblith2021why}.
This is connected to the fact that they tend to collapse intra-class variability of features, resulting in loss of information in the logits about the resemblances between instances of different classes~\cite{muller2019does}.
The neural collapse phenomenon~\cite{papyan2020prevalence} indicates that as training progresses, the intra-class variation becomes negligible, and features collapse to their class means.
As such, this dilemma inherently originates from the practice of representing classes by a single vector.
This motivates us to study representing classes by high-dimensional subspaces.

Representing classes as subspaces in machine learning can be dated back, at least, to 1973~\cite{watanabe1973subspace}.
This core idea is re-emerging recently in various contexts such as clustering~\cite{zhang2018scalable}, few-shot classification~\cite{devos2020regression,simon2020adaptive} and out-of-distribution detection~\cite{wang2022vim}, albeit in each case a different concrete instantiation was proposed.
However, very few works study the subspace representation in large-scale classification, a fundamental computer vision task that benefits numerous downstream tasks.
We propose the \emph{Grassmann Class Representation} (GCR) to fill this gap and study its impact on classification and feature transferability via extensive experiments.
To be specific, each class \(i\) is associated with a linear subspace \(S_i\), and for any feature vector \(\vx\), the \(i\)-th logit \(l_i\) is defined as the norm of its projection onto the subspace \(S_i\),
\begin{equation}\label{eq:firsteq}
  l_i := \left\|\proj_{S_i}\vx\right\|.
\end{equation}
In the following, we answer the two critical questions,
\begin{enumerate}
  \setlength\itemsep{-0.4em}
  \item How to effectively optimize the subspaces in training?
  \item Is Grassmann class representation useful?
\end{enumerate}

Several drawbacks and important differences in previous works make their methodologies hard to generalize to the large-scale classification problem.
Firstly, their subspaces might be not learnable.
In ViM~\cite{wang2022vim}, DSN~\cite{simon2020adaptive} and the SVD formulation of~\cite{zhang2018scalable}, subspaces are obtained \emph{post hoc} by PCA-like operation on feature matrices without explicit parametrization and learning.
Secondly, for works with learnable subspaces, their learning procedure for subspaces might not apply.
For example, in RegressionNet~\cite{devos2020regression}, the loss involves \emph{pairwise} subspace orthogonalization, which does not scale when the number of classes is large because the computational cost will soon be infeasible.
And thirdly, the objective of~\cite{zhang2018scalable} is unsupervised subspace clustering, which needs substantial changes to adapt to classification.

It is well known that the set of \(k\)-dimensional linear subspaces form a Grassmann manifold, so finding the optimal subspace representation for classes is to optimize on the Grassmannian.
Therefore, a natural solution to Question 1 is to use geometric optimization~\cite{edelman1998geometry}, which optimizes the objective function under the constraint of a given manifold.
Points being optimized are moving along geodesics instead of following the direction of Euclidean gradients.
We implemented an efficient Riemannian SGD for optimization in the Grassmann manifold in Algorithm~\ref{alg:gsgd},
which integrates the geometric optimization into deep learning frameworks so that the subspaces in Grassmannian and the model weights in Euclidean are jointly optimized.

The Grassmann class representation sheds light on the incompatibility issue between accuracy and transferability.
Features can vary in a high-dimensional subspace without harming the accuracy.
We empirically verify this speculation in Section~\ref{sec:exp}, which involves both CNNs (ResNet~\cite{he2016deep}, ResNet-D~\cite{he2019bag}, ResNeXt~\cite{xie2017aggregated}, VGG13-BN~\cite{simonyan2014very}) and vision transformers (Swin~\cite{liu2021swin} and Deit3~\cite{touvron2022deit}).
We found that with larger subspace dimensions \(k\), the intra-class variation increase, and the feature transferability improve.
The classification performance of GCR is also superior to the vector form.
For example, on ImageNet-1K, the top-1 error rates of ResNet50-D, ResNeXt50, Swin-T and Deit3-S are reduced relatively by 5.6\%, 4.5\%, 3.0\%, and 3.5\%, respectively.

To summarize, our contributions are three folds.
(1) We propose the Grassmann class representation and learn the subspaces jointly with other network parameters with the help of Riemannian SGD.
(2) We showed its superior accuracy on large-scale classification both for CNNs and vision transformers.
(3) We showed that features learned by the Grassmann class representation have better transferability.

\section{Related Work}\label{sec:related}

\paragraph{Geometric Optimization}
\cite{edelman1998geometry} developed the geometric Newton and conjugate gradient algorithms on the Grassmann and Stiefel manifolds in their seminal paper.
Riemannian SGD was introduced in~\cite{bonnabel2013stochastic} with an analysis on convergence and there are variants such as Riemannian SGD with momentum \cite{roy2018geometry} or adaptive \cite{kasai2019riemannian}.
Other popular Euclidean optimization methods such as Adam are also studied in the Riemannian manifold context~\cite{becigneul2018riemannian}.
\cite{lezcano2019cheap} study the special case of \(\SO(n)\) and \(U(n)\) and uses the exponential map to enable Euclidean optimization methods for Lie groups.
The idea was generalized into trivialization in~\cite{lezcano2019trivializations}.
Our Riemannian SGD Algorithm~\ref{alg:gsgd} is tailored for Grassmannian, so we use the closed-form equation for geodesics.
Applications of geometric optimization include matrix completion~\cite{7039534,Li_2015_CVPR,7282649,nimishakavi2018dual}, hyperbolic taxonomy embedding~\cite{nickel2018learning}, to name a few.
\cite{hamm2008grassmann} proposed the Grassmann discriminant analysis, in which features are modeled as linear subspaces.

\paragraph{Orthogonal Constraints}
Geometric optimization in deep learning is mainly used for providing orthogonal constraints in the design of network structure~\cite{harandi2016generalized,ozay2018training}, aiming to mitigate the gradient vanishing or exploding problems.
Orthogonality are also enforced via regularizations~\cite{arjovsky2016unitary,xie2017all,bansal2018can,qi2020deep,wang2020orthogonal}.
Contrastingly, we do not change the network structures, and focus ourselves on the subspace form of classes.
SiNN~\cite{roy2019siamese} uses the Stiefel manifold to construct Mahalanobis distance matrices in Siamese networks to improve embeddings in metric learning.
It does not have the concept of classes.

\paragraph{Improving Feature Diversity}
Our GCR favors the intra-class feature variation by providing a subspace to vary.
There are other efforts to encourage feature diversity.
SoftTriplet loss~\cite{qian2019softtriple} and SubCenterArcFace~\cite{deng2020sub} model each class as local clusters with several centers or sub-centers.
\cite{zhang2017learning} uses a global orthogonal regularization to drive local descriptors spread out in the features space.
\cite{yu2020learning} proposes to learn low-dimensional structures from the maximal coding rate reduction principle.
The subspaces are estimated using PCA on feature vectors after the training.

\paragraph{Classes as Subspaces}
ViM~\cite{wang2022vim} uses a subspace to denote the out-of-distribution class, which is obtained via PCA-like postprocessing after training.
\(k\)SCN~\cite{zhang2018scalable} uses subspaces to model clusters in unsupervised learning.
Parameters of models and subspaces are optimized alternatively in a wake-and-sleep fashion.
CosineSoftmax~\cite{kornblith2021why}
defines logits via the inner product between the feature and normalized class vector.
Since the class vector is normalized to be unit length, it is regarded as representing the class as a 1-dimensional subspace.
ArcFace~\cite{deng2019arcface} improves over cosine softmax by adding angular margins to the loss.
RegressionNet~\cite{devos2020regression} uses the subspace spanned by the \(K\) feature vectors of each class in the \(N\)-way \(K\)-shot classification.
The computational cost of its pairwise subspace orthogonalization loss is quadratic \emph{w.r.t.} the number of classes and becomes infeasible when the number of classes is large.
DSN~\cite{simon2020adaptive} for few-shot learning computed subspaces from the data matrix rather than parametrized and learned,
and its loss also involves pairwise class comparison which does not scale.
Different from these formulations, we explicitly parametrize classes as high-dimensional subspaces and use geometric optimization to learn them in supervised learning.

\section{Preliminaries}\label{sec:preliminary}

In this section, we briefly review the essential concepts in geometric optimization.
Detailed exposition can be found in~\cite{edelman1998geometry,absil2009optimization}.
Given an \(n\)-dimensional Euclidean space \(\R^n\), the set of \(k\)-dimensional linear subspaces forms the Grassmann manifold \(\gG(k, n)\).
A computational-friendly representation for subspace \(S \in\gG(k, n)\) is an orthonormal matrix \(\mS\in\R^{n\times k}\), where \(\mS^T \mS = \mI_k\) and \(\mI_k\) is the \(k\times k\) identity matrix.
Columns of the matrix \(\mS\) can be interpreted as an orthonormal basis for the subspace \(S\).
The matrix form is \emph{not unique}, as right multiplying an orthonormal matrix will produce a new matrix representing the same subspace.
Formally, Grassmannian is a quotient space of the Stiefel manifold and the orthogonal group \(\gG(k,n)=\St(k, n)/\gO(k)\), where \(\St(k, n)=\{\mX\in\R^{n\times k}|\mX^T \mX = \mI_k\}\) and \(\gO(k)=\{\mX\in\R^{k\times k}|\mX^T \mX = \mI_k\}\).
When the context is clear, we use the space \(S\) and one of its matrix forms \(\mS\) interchangeably.

Given a function \(f:\gG(k,n)\to\R\) defined on the Grassmann manifold,
the Riemannian gradient of \(f\) at point \(S\in\gG(k,n)\) is given by~\cite[Equ.~(2.70)]{edelman1998geometry},
\begin{equation}\label{eq:formula1}
  \nabla f(\mS)
  = f_{\mS} - {\mS}{\mS}^T f_{\mS},
\end{equation}
where \(f_{\mS}\) is the Euclidean gradient with elements \((f_{\mS})_{ij} = \frac{\partial f}{\partial \mS_{ij}}\).
When performing gradient descend on the Grassmann manifold,
and suppose the current point is \(\mS\) and the current Riemannian gradient is \(\mG\), then the next point is the endpoint of \(\mS\) moving along the geodesic toward the tangent \(\mG\) with step size \(t\).
The geodesic is computed by~\cite[Equ.~(2.65)]{edelman1998geometry},
\begin{equation}\label{eq:formula2}
  \mS(t) = \left(\mS\mV \cos(t\mSigma) + \mU\sin(t\mSigma)\right)\mV^T,
\end{equation}
where \(\mU\mSigma \mV^T = \mG\) is the thin SVD of \(\mG\).

\section{Learning Grassmann Class Representation}\label{sec:method}

Denote the weight of the last fully-connected (fc) layer in a classification network by \(\mW\in\R^{n\times C}\) and the bias by \(\vb\in\R^C\), where \(n\) is the dimension of features and \(C\) is the number of classes.
The \(i\)-th column vector \(\vw_i\) of \(\mW\) is called the \(i\)-th class representative vector.
The \(i\)-th logit is computed as the inner product between a feature \(\vx\) and the class vector (and optionally offset by a bias \(b_i\)), namely \(\vw_i^T \vx + b_i\).
We extend this well-established formula to a multi-dimensional subspace form
\(l_i := \left\|\proj_{S_i}\vx\right\|\)
where \(S_i\in\gG(k, n)\) is a \(k\)-dimensional subspace in the \(n\)-dimensional feature space.
We call \(S_i\) the \(i\)-th \emph{class representative space}, or class space in short.
Comparing the new logit to the standard one,
the inner product of feature \(\vx\) with class vector is replaced by the norm of the subspace projection \(\proj_{S_i}\vx\) and the bias term is omitted.
We found that normalizing features to a constant length \(\gamma\) improves training.
Incorporating this, \Eqref{eq:firsteq} becomes
\begin{equation}\label{eq:newlogit}
  l_i := \left\|\proj_{S_i}\frac{\gamma\vx}{\|\vx\|}\right\|.
\end{equation}
We assume \(\vx\) has been properly normalized throughout this paper
so that we can simply use \Eqref{eq:firsteq} in the discussion.
We call this formulation of classes and logits the \emph{Grassmann Class Representation} (GCR).

The subspace class formulation requires two changes to an existing network.
Firstly, the last fc layer is replaced by the \emph{Grassmann fully-connected layer}, which transforms features to logits using \Eqref{eq:newlogit}.
Details can be found in Section~\ref{sec:use}.
Secondly, the optimizer is extended to process the new geometric layer, which is explained in Section~\ref{sec:optimize}.
Ultimately, parameters of the geometric layer are optimized using Riemannian SGD, while other parameters are simultaneously optimized using SGD, AdamW, or Lamb, \emph{etc}.

\subsection{Grassmann Class Representation}\label{sec:use}

Suppose for class \(i\in\left\{1, 2, \dots, C\right\}\), its subspace representation is \(S_i\in\gG(k_i, n)\), where the dimension \(k_i\) is a hyperparameter and is fixed during training.
The tuple of subspaces \((S_1, S_2, \dots, S_C)\) will be optimized in the product space
\(\gG(k_1, n) \times \gG(k_2, n) \times \cdots \times \gG(k_C, n)\).
Denote a matrix instantiation of \(S_i\) as \(\mS_i\in\R^{n\times k}\), where the column vectors form an orthonormal basis of \(S_i\), then we concatenate these matrices into a big matrix
\begin{equation}\label{eq:sconcat}
  \mS = [\mS_1 \;\mS_2\; \cdots \;\mS_C] \in\R^{n\times(k_1+k_2+\cdots+k_C)}.
\end{equation}
The matrix \(\mS\) consists of the parameters that are optimized numerically.
For a feature \(\vx\), the product \(\mS_i^T \vx\) gives the coordinate of \(\proj_{S_i}\vx\) under the orthonormal basis formed by the columns of \(\mS_i\).
By definition in~\Eqref{eq:firsteq}, the logit for class \(i\) and the (normalized) feature \(\vx\) is
\begin{equation}
  l_i = \left\|\proj_{S_i}\vx\right\|
  =  \left\|\mS_i^T \vx\right\|.
\end{equation}

\paragraph{Grassmann Fully-Connected Layer}

We implement the geometric fully-connected layer using the plain old fc layer.
The shape of the weight \(\mS\) is \(n\times(k_1+k_2+\cdots+k_C)\), as shown in~\Eqref{eq:sconcat}.
In the forward pass, the input feature is multiplied with the weight matrix to get a temporary vector \(\vt = \mS^T \vx\), then the first element of the output is the norm of the sub-vector \((t_1, \dots, t_{k_1})\), and the second element of the output is the norm of \((t_{k_1+1}, t_{k_1+2}, \dots, t_{k_1+k2})\), and so on.
If all \(k_i\)'s be the same value \(k\), as in our experiments, then the computation can be conveniently paralleled in one batch using tensor computation libraries.

\paragraph{Parameter Initialization}

Each matrix instantiation of the subspace should be initialized as an orthonormal matrix.
To be specific,
each block \(\mS_i\) of the weight \(\mS\) in~\Eqref{eq:sconcat} is orthonormal,
while the matrix \(\mS\) needs not be orthonormal.
For each block \(\mS_i\), we first fill them with standard Gaussian noises and then use \(\qf(\mS_i)\), namely the Q factor of its QR decomposition, to transform it to an orthonormal matrix.
The geometric optimization Algorithm~\ref{alg:gsgd} will ensure their orthonormality during training.

\subsection{Optimize the Subspaces}\label{sec:optimize}

\begin{figure}
  \centering
  \includegraphics[width=0.7\linewidth]{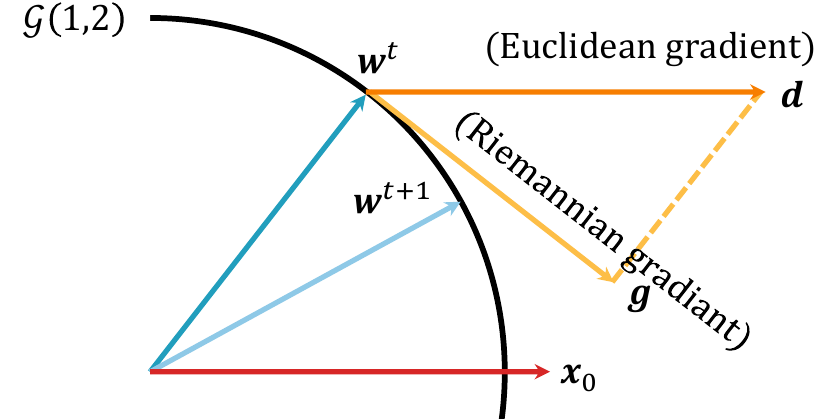}
  \caption{\small
    Geometric optimization in Grassmann manifold \(\gG(1, 2)\).
    Each point (\emph{e.g.} \(\vw^t\)) in the black circle represent the \(1\)-dimensional linear subspace \(S\) passing through it.
    The goal is to learn a subspace \(S\) to maximize \(\left\| \proj_S \vx_0 \right\|\).
    \(\vg\) is the Riemannian gradient obtained by the projection of Euclidean gradient \(\vd\).
    \(\vw^t\) moves along the geodesic towards the direction \(\vg\) to a new point \(\vw^{t+1}\).
  }\label{fig:toy1}
\end{figure}

Geometric optimization is to optimize functions defined on manifolds.
The key is to find the Riemannian gradient \emph{w.r.t.}\@ the loss function and then descend along the geodesic.
Here the manifold in concern is the Grassmannian \(\gG(k,n)\).
As an intuitive example, \(\gG(1, 2)\), composed of all lines passing through the origin in a two-dimensional plane,
can be pictured as a unit circle where each point on it denotes the line passing through that point.
Antipodal points represent the same line.
To illustrate how geometric optimization works, we define a toy problem on \(\gG(1, 2)\) that maximizes the norm of the projection of a fixed vector \(\vx_0\) onto a line through the origin, namely \(\max_{S\in\gG(1, 2)} \left\| \proj_S \vx_0 \right\|\).

As shown in Fig.~\ref{fig:toy1}, we represent \(S\) with a unit vector \(\vw\in S\).
Suppose at step \(t\), the current point is \(\vw^{(t)}\), then it is easy to compute that the Euclidean gradient at \(\vw^{(t)}\) is \(\vd=\vx_0\), and the Riemannian gradient \(\vg\) is the Euclidean gradient \(\vd\) projected to the tangent space of \(\gG(1,2)\) at point \(\vw^{(t)}\).
The next iterative point \(\vw^{(t+1)}\) is to move \(\vw^{(t)}\) along the geodesic toward the direction \(\vg\).
Without geometric optimization, the next iterative point would have lied at \(\vw^{(t)}+\gamma\vd\), jumping outside of the manifold.

The following proposition computes the Riemannian gradient for the subspace in \Eqref{eq:firsteq}.

\begin{proposition}\label{prop:grad}
  Let \(\mS\in\R^{n\times k}\) be a matrix instantiation of subspace \(S\in\gG(k,n)\), and \(\vx\in\R^n\) is a vector in Euclidean space, then the Riemannian gradient \(\mG\) of \(l(S, \vx) = \left\|\proj_{S}\vx\right\|\) w.r.t. \(S\) is
  \begin{equation}\label{eq:rgrad}
    \mG = \frac{1}{l} (\mI_n - \mS\mS^T) \vx \vx^T \mS.
  \end{equation}
\end{proposition}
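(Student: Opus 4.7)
The plan is to apply the formula for the Riemannian gradient on the Grassmannian given in \Eqref{eq:formula1}, namely $\nabla f(\mS) = f_\mS - \mS\mS^T f_\mS$, so the task reduces to computing the Euclidean gradient $f_\mS$ of $l(\mS,\vx) = \|\proj_S \vx\|$ and then projecting off the component along $\mS$.

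First, I would rewrite $l$ in a form convenient for differentiation. Since the columns of $\mS$ form an orthonormal basis of $S$, one has $\proj_S \vx = \mS\mS^T\vx$, and a short computation using $\mS^T\mS = \mI_k$ gives $\|\proj_S\vx\|^2 = \vx^T \mS \mS^T \vx = \|\mS^T\vx\|^2$. Thus $l^2$ can be written as $\Tr(\mS^T \vx\vx^T \mS)$, a quadratic form in $\mS$ whose Euclidean gradient is easily read off as $2\vx\vx^T \mS$.

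Next I would apply the chain rule through the square root: $f_\mS = \partial l / \partial \mS = (1/(2l)) \cdot 2\vx\vx^T \mS = (1/l)\,\vx\vx^T\mS$, which is well-defined whenever $l > 0$ (the case $l = 0$, i.e.\ $\vx \perp S$, is a nonsmooth point of $l$ and can be excluded). Substituting into \Eqref{eq:formula1} yields
\begin{equation*}
  \mG = \frac{1}{l}\vx\vx^T \mS - \mS\mS^T \cdot \frac{1}{l}\vx\vx^T \mS = \frac{1}{l}(\mI_n - \mS\mS^T)\vx\vx^T \mS,
\end{equation*}
which is exactly \Eqref{eq:rgrad}.

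There is essentially no hard step; the only subtlety worth stating explicitly is invariance under the quotient $\gG(k,n) = \St(k,n)/\gO(k)$. One should verify that $\mG$ transforms correctly when $\mS$ is replaced by $\mS\mQ$ for $\mQ\in\gO(k)$, so that the answer depends only on the subspace $S$ and not on the choice of orthonormal basis. This follows immediately because $\mS\mS^T$ is basis-independent and $\mG$ right-multiplies by $\mS$, so $\mG \mapsto \mG\mQ$ under the change of basis, matching the expected transformation of a tangent vector representative on the Stiefel representation of the Grassmannian.
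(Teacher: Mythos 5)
Your proof is correct and follows essentially the same route as the paper: rewrite \(l=\sqrt{\vx^T\mS\mS^T\vx}\), compute the Euclidean gradient \(\frac{1}{l}\vx\vx^T\mS\), and project via \Eqref{eq:formula1}. The extra remarks on the nonsmooth point \(l=0\) and on invariance under right multiplication by \(\gO(k)\) are sound additions but not needed for the argument.
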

\begin{proof}
  Rewrite \(\left\|\proj_{S}\vx\right\| = \sqrt{\vx^T \mS \mS^T \vx}\), and compute the Euclidean derivatives as
  \begin{equation}\label{eq:egrad}
    \frac{\partial l}{\partial \mS} = \frac{1}{l}\vx \vx^T \mS, \quad
    \frac{\partial l}{\partial \vx} = \frac{1}{l}\mS \mS^T \vx.
  \end{equation}
  Then~\Eqref{eq:rgrad} follows from~\Eqref{eq:formula1}.
\end{proof}

We give a geometric interpretation of Proposition~\ref{prop:grad}.
Let \(\vw_1\) be the unit vector along direction \(\proj_S \vx\), then expand it to an orthonormal basis of \(S\), say \(\{\vw_1, \vw_2, \dots, \vw_k\}\).
Since the Riemannian gradient is invariant to matrix instantiation, we can set \(\mS=[\vw_1 \;\vw_2\; \cdots \;\vw_k]\).
Then~\Eqref{eq:rgrad} becomes
\begin{equation}\label{eq:simpler}
  \mG = \left[\,(\mI_n - \mS\mS^T) \vx \;\;\,\vzero\;\; \cdots \;\;\vzero\,\right],
\end{equation}
since \(\vw_i\perp \vx, i=2, 3,\dots, k\) and \(\vw_1^T\vx = l\).
\Eqref{eq:simpler} shows that in the single-sample case, only one basis vector \(\vw_1\), the unit vector in \(S\) that is closest to \(\vx\), needs to be rotated towards vector \(\vx\).

\paragraph{Riemannian SGD}

\begin{algorithm}[t]
  \caption{An Iteration of the Riemannian SGD with Momentum for Grassmannian at Iteration \(t\)
  }\label{alg:gsgd}
  \begin{algorithmic}[1]
    \Require Learning rate \(\tau>0\), momentum \(\mu\in[0,1)\), Grassmannian weight matrix \(\mS^{(t)}\in\R^{n\times k}\), momentum buffer \(\mM^{(t-1)}\in\R^{n\times k}\), Euclidean gradient \(\mD\in\R^{n\times k}\).
    \vspace{0.15cm}
    \State Riemannian gradient by~\Eqref{eq:formula1}, \(\mG \gets (\mI_n - \mS\mS^T) \mD\).
    \State Approximately parallel transport \(\mM\) to the tangent space of current point \(\mS^{(t)}\) by projection
    \begin{equation}
      \mM \gets (\mI_n - \mS\mS^T) \mM^{(t-1)}.
    \end{equation}
    \State Update momentum \(\mM^{(t)} \gets \mu \mM + \mG\).
    \State Move along geodesic using~\Eqref{eq:formula2}.
    If \(\mU\mSigma \mV^T = \mM^{(t)}\) is the thin SVD, then
    \begin{equation*}
      \mS^{(t+1)} \gets \left(\mS^{(t)}\mV \cos(\tau\mSigma) + \mU\sin(\tau\mSigma)\right)\mV^T.
    \end{equation*}
    \State (Optional) Orthogonalization \(\mS^{(t+1)} \gets\qf(\mS^{(t+1)})\).
  \end{algorithmic}
\end{algorithm}

Parameters of non-geometric layers are optimized as usual using traditional optimizers such as SGD, AdamW, or Lamb during training.
For the geometric Grassmann fc layer, its parameters are optimized using the Riemannian SGD (RSGD) algorithm.
The pseudo-code of our implementation of RSGD with momentum is described in Algorithm~\ref{alg:gsgd}.
We only show the code for the single-sample, single Grassmannian case.
It is trivial to extend them to the batch version and the product of Grassmannians.
In step~2, we use projection to approximate the parallel translation of momentum, and the momentum update formula in step~3 is adapted from the official PyTorch implementation of SGD.
Weight decay does not apply here since spaces are scaleless.
Note that step 5 is optional since \(\mS^{(t+1)}\) in theory should be orthonormal.
In practice, to suppress the accumulation of numerical inaccuracies, we do an extra orthogonalization step using \(\qf(\cdot)\) every 5 iterations.
Algorithm~\ref{alg:gsgd} works seamlessly with traditional Euclidean optimizers and converts the gradient from Euclidean to Riemannian on-the-fly for geometric parameters.

\section{Experiment}\label{sec:exp}

In this section, we empirically study the influence of the Grassmann class representation under different settings.
In Section~\ref{sec:cls}, GCR demonstrates superior performance on the large-scale ImageNet-1K classification, a fundamental vision task.
We experimented with both CNNs and vision transformers and observed consistent improvements.
Then, in Section~\ref{sec:transfer}, we show that GCR improves the feature transferability by allowing larger intra-class variation.
The choice of hyper-parameters and design decisions are studied in Section~\ref{sec:analysis}.
Extra supportive experiments are presented in the supplementary material.

\paragraph{Experiment Settings}

For baseline methods, unless stated otherwise, we use the same training protocols (including the choice of batch size, learning rate policy, augmentation, optimizer, loss, and epochs) as in their respective papers.
The input size is \(224\times224\) for all experiments, and checkpoints with the best validation scores are used.
All codes, including the implementation of our algorithm and re-implementations of the compared baselines, are implemented based on the \emph{mmclassification}~\cite{mmcls} package.
PyTorch~\cite{NEURIPS2019_9015} is used as the training backend and each experiment is run on 8 NVIDIA Tesla V100 GPUs using distributed training.

Networks for the Grassmann class representation are set up by the drop-in replacement of the last linear fc layer in baseline networks with a Grassmann fc layer.
The training protocol is kept the same as the baseline whenever possible.
One necessary exception is to enhance the optimizer (\emph{e.g.}, SGD, AdamW or Lamb) with RSGD (\emph{i.e.}, RSGD+SGD, RSGD+AdamW, RSGD+Lamb) to cope with Grassmannian layers.
To reduce the number of hyper-parameters, we simply set the subspace dimension \(k\) to be the same for all classes and we use \(k=8\) throughout this section unless otherwise specified.
Suppose the dimension of feature space is \(n\), then the Grassmann fully-connected layer has the geometry of \(\Pi_{i=1}^{1000} \gG(8, n)\).
For hyper-parameters, we set \(\gamma=25\).
Experiments with varying \(k\)'s can be found in Section~\ref{sec:transfer} and experiments on tuning \(\gamma\) are discussed in Section~\ref{sec:analysis}.

\subsection{Improvements on Classification Accuracy}\label{sec:cls}

We apply Grassmann class representation to the large-scale classification task.
The widely used ImageNet-1K~\cite{5206848} dataset, containing 1.28M high-resolution training images and 50K validation images, is used to evaluate classification performances.
Experiments are organized into three groups which support the following observations.
(1) It has superior performance compared with different ways of representing classes.
(2) Grassmannian improves accuracy on different network architectures, including CNNs and the latest vision transformers.
(3) It also improves accuracy on different training strategies for the same architecture.

\begin{table}[t]
  \caption{
    Validation accuracy of ResNet50-D on ImageNet-1K using different class representations.
  }\label{tab:softtriple}
  \centering
  \small
  \setlength\tabcolsep{1.5pt}
  \begin{tabular}{l@{}ccl}
    \toprule
    Setting                                                              & Top1                   & Top5                   & Class Representation                                          \\
    \midrule
    Softmax~\cite{bridle1990probabilistic}                               & \(78.04\)              & \(93.89\)              & vector class representation                                   \\
    CosineSoftmax~\cite{kornblith2021why}                                & \(78.30\)              & \(94.07\)              & 1-dim subspace                                                \\
    ArcFace~\cite{deng2019arcface}                                       & \(76.66\)              & \(92.98\)              & \resizebox{0.95\width}{\height}{1-dim subspace with margin}   \\
    MultiFC                                                              & \(77.34\)              & \(93.65\)              & 8 fc layers ensembled                                         \\
    SoftTriple~\cite{qian2019softtriple}                                 & \(75.55\)              & \(92.62\)              & 8 centers weighted average                                    \\
    \resizebox{0.93\width}{\height}{SubCenterArcFace~\cite{deng2020sub}} & \(77.10\)              & \(93.51\)              & \resizebox{0.98\width}{\height}{8 centers with one activated} \\
    GCR (Ours)                                                           & \(\boldsymbol{79.26}\) & \(\boldsymbol{94.44}\) & \resizebox{0.98\width}{\height}{8-dim subspace with RSGD}     \\
    \bottomrule
  \end{tabular}
\end{table}

\begin{table*}[t]
  \caption{
    Comparing Grassmann class representation (\(k=8\)) with vector class representation on different architectures.
    Validation accuracy on ImageNet.
    \(n\) is the feature dimension, \emph{BS} means batch size, \emph{WarmCos} means using warm up together with the cosine learning rate decay.
    \emph{CE} is cross-entropy, \(LS\) is label smoothing, and \(BCE\) is binary cross-entropy.
  }\label{tab:arch}
  \centering
  \small
  \setlength\tabcolsep{2.8pt}
  \begin{tabular}{l@{}cc@{}c@{}l|rl@{}cc|rl@{}cc}
    \toprule
    \multicolumn{5}{c}{\textbf{Setting}} & \multicolumn{4}{c}{\textbf{Vector Class Representation}} & \multicolumn{4}{c}{\textbf{Grassmann Class Representation (\(k=8\))}}                                                                                                                                                                                                                                                         \\
    \textbf{Architecture}                & \(n\)                                                    & BS                                                                    & Epoch~ & Lr Policy & Loss & Optimizer~ & \textbf{Top1} & \textbf{Top5}~~ & Loss & Optimizer   & \textbf{Top1}                                                             & \textbf{Top5}                                                             \\
    \cmidrule(r){1-5}\cmidrule(lr){6-9}\cmidrule(l){10-13}
    ResNet50 \cite{he2016deep}           & 2048                                                     & 256                                                                   & 100    & Step      & CE   & SGD        & \(76.58\)     & \(93.05\)       & CE   & RSGD+SGD    & \(\boldsymbol{77.77}\){\footnotesize\color{OliveGreen}(\(\uparrow\)1.19)} & \(\boldsymbol{93.67}\){\footnotesize\color{OliveGreen}(\(\uparrow\)0.62)} \\
    ResNet50-D \cite{he2019bag}          & 2048                                                     & 256                                                                   & 100    & Cosine    & CE   & SGD        & \(78.04\)     & \(93.89\)       & CE   & RSGD+SGD    & \(\boldsymbol{79.26}\){\footnotesize\color{OliveGreen}(\(\uparrow\)1.22)} & \(\boldsymbol{94.44}\){\footnotesize\color{OliveGreen}(\(\uparrow\)0.55)} \\
    ResNet101-D \cite{he2019bag}~        & 2048                                                     & 256                                                                   & 100    & Cosine    & CE   & SGD        & \(79.32\)     & \(94.62\)       & CE   & RSGD+SGD    & \(\boldsymbol{80.24}\){\footnotesize\color{OliveGreen}(\(\uparrow\)0.92)} & \(\boldsymbol{94.95}\){\footnotesize\color{OliveGreen}(\(\uparrow\)0.33)} \\
    ResNet152-D \cite{he2019bag}         & 2048                                                     & 256                                                                   & 100    & Cosine    & CE   & SGD        & \(80.00\)     & \(95.02\)       & CE   & RSGD+SGD    & \(\boldsymbol{80.44}\){\footnotesize\color{OliveGreen}(\(\uparrow\)0.44)} & \(\boldsymbol{95.21}\){\footnotesize\color{OliveGreen}(\(\uparrow\)0.19)} \\
    ResNeXt50 \cite{xie2017aggregated}   & 2048                                                     & 256                                                                   & 100    & Cosine    & CE   & SGD        & \(78.02\)     & \(93.98\)       & CE   & RSGD+SGD    & \(\boldsymbol{79.00}\){\footnotesize\color{OliveGreen}(\(\uparrow\)0.98)} & \(\boldsymbol{94.28}\){\footnotesize\color{OliveGreen}(\(\uparrow\)0.30)} \\
    VGG13-BN \cite{simonyan2014very}     & 4096                                                     & 256                                                                   & 100    & Step      & CE   & SGD        & \(72.02\)     & \(90.79\)       & CE   & RSGD+SGD    & \(\boldsymbol{73.40}\){\footnotesize\color{OliveGreen}(\(\uparrow\)1.38)} & \(\boldsymbol{91.30}\){\footnotesize\color{OliveGreen}(\(\uparrow\)0.51)} \\
    Swin-T \cite{liu2021swin}            & 768                                                      & 1024                                                                  & 300    & WarmCos   & LS   & AdamW      & \(81.06\)     & \(95.51\)       & LS   & RSGD+AdamW~ & \(\boldsymbol{81.63}\){\footnotesize\color{OliveGreen}(\(\uparrow\)0.57)} & \(\boldsymbol{95.77}\){\footnotesize\color{OliveGreen}(\(\uparrow\)0.26)} \\
    Deit3-S \cite{touvron2022deit}       & 384                                                      & 2048                                                                  & 800    & WarmCos   & BCE  & Lamb       & \(81.53\)     & \(95.21\)       & CE   & RSGD+Lamb   & \(\boldsymbol{82.18}\){\footnotesize\color{OliveGreen}(\(\uparrow\)0.65)} & \(\boldsymbol{95.73}\){\footnotesize\color{OliveGreen}(\(\uparrow\)0.52)} \\
    \bottomrule
  \end{tabular}
\end{table*}

\paragraph{On Representing Classes}

In this group, we compare seven alternative ways to represent classes.
(1) \textbf{Softmax}~\cite{bridle1990probabilistic}
is the plain old vector class representation using the fc layer to get logits.
(2) \textbf{CosineSoftmax}~\cite{kornblith2021why} represents a class as a 1-dimensional subspace since the class vector is normalized to be unit length.
We set the scale parameter to \(25\) and do not add a margin.
(3) \textbf{ArcFace}~\cite{deng2019arcface} improves over cosine softmax by adding angular margins to the loss.
The default setting (\(s=64, m=0.5\)) is used.
(4) \textbf{MultiFC} is an ensemble of independent fc layers.
Specifically, we add \(8\) fc heads to the network.
These fc layers are trained side by side, and their losses are then averaged.
When testing, the logits are first averaged, and then followed by softmax to output the ensembled prediction.
(5) \textbf{SoftTriple}~\cite{qian2019softtriple} models each class by \(8\) centers.
The weighted average of logits computed from multiple class centers is used as the final logit.
We use the recommended parameters (\(\lambda=20, \gamma=0.1, \tau=0.2\) and \(\delta=0.01\)) from the paper.
(6) \textbf{SubCenterArcFace}~\cite{deng2020sub} improves over ArcFace by using \(K\) sub-centers for each class and in training only the center closest to a sample is activated.
We set \(K=8\) and do not drop sub-centers or samples since ImageNet is relatively clean.
(7) The last setting is our \textbf{GCR} with subspace dimension \(k=8\).
For all seven settings ResNet50-D is used as the backbone network and all models are trained on ImageNet-1K
using the same training strategy described in the second row of Tab.~\ref{tab:arch}.

Results are listed in Tab.~\ref{tab:softtriple},
from which we find that the Grassmann class representation is most effective.
Compared with the vector class representation of vanilla softmax, the top-1 accuracy improves from \(78.04\%\) to \(79.26\%\), which amounts to \(5.6\%\) relative error reduction.
Compared with previous ways of 1-dimensional subspace representation, \emph{i.e.\@} CosineSoftmax and ArcFace, our GCR improves the top-1 accuracy by \(0.96\%\) and \(2.60\%\), respectively.
Compared with the ensemble of multiple fc, the top-1 is improved by \(1.92\%\).
Interestingly, simply extending the class representation to multiple centers such as SoftTriple (\(75.55\%\)) and SubCenterArcFace (\(77.10\%\)) does not result in good performances when training from scratch on the ImageNet-1K dataset.
SoftTriple was designed for fine-grained classification and
SubCenterArcFace was designed for face verification.
Their strong performances in their intended domains do not naively generalize here.
This substantiates that making the subspace formulation competitive is a non-trivial contribution.

\paragraph{On Different Architectures}

We apply Grassmann class representation to eight network architectures, including six CNNs (ResNet50~\cite{he2016deep}, ResNet50/101/152-D~\cite{he2019bag}, ResNetXt50~\cite{xie2017aggregated}, VGG13-BN~\cite{simonyan2014very}) and two transformers (Swin~\cite{liu2021swin}, Deit3~\cite{touvron2022deit}).
For each model, we replace the last fc layer with Grassmannian fc and compare performances before and after the change.
Their training settings together with validation top-1 and top-5 accuracies are listed in Tab.~\ref{tab:arch}.
The results show that GCR is effective across different model architectures.
For all architectures, the improvement on top-1 is in the range \(0.44\!-\!1.38\%\).
The improvement is consistent not only for different architectures, but also across different optimizers (\emph{e.g.}, SGD, AdamW, Lamb) and different feature space dimensions (\emph{e.g.}, 2048 for ResNet, 768 for Swin, and 384 for Deit3).

\paragraph{On Different Training Strategies}

In this group,
we train ResNet50-D with the three training strategies (RSB-A3, RSB-A2, and RSB-A1) proposed in~\cite{wightman2021resnet}, which aim to push the performance of ResNets to the extreme.
Firstly, we train ResNet50-D with the original vector class representation and get top-1 accuracies of \(79.36\%\), \(80.29\%\), and \(80.53\%\), respectively.
Then, we replace the last classification fc with the Grassmann class representation (\(k=8\)), and their top-1 accuracies improve to \(79.88\%\), \(80.74\%\), and \(81.00\%\), respectively.
Finally, we add the FixRes~\cite{touvron2019fixing} trick to the three strategies, namely training on \(176\times176\) image resolution and when testing, first resize to \(232\times232\) and then center crop to \(224\times224\).
We get further boost in top-1 which are \(80.20\%\), \(81.04\%\) and \(81.29\%\), respectively.
Results are summarized in Fig.~\ref{fig:strategy}.

\begin{figure}[t]
  \centering
  \includegraphics[height=2.8cm]{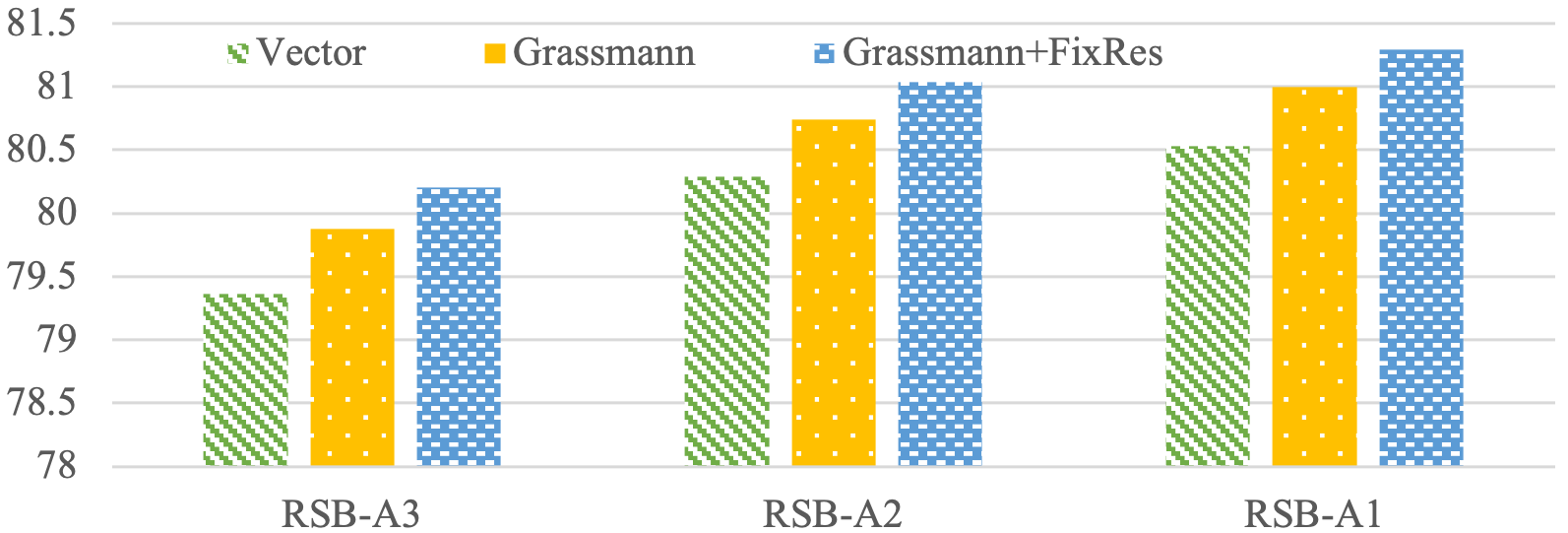}
  \caption{\small
    Validation accuracies of ResNet50-D on ImageNet-1K under different training strategies (RSB-A3, RSB-A2, and RSB-A1).
    Green bars are vector class representations;
    yellow bars are Grassmannian with \(k=8\);
    blue bars added the FixRes trick when training Grassmannian.
    The best top-1 of \textbf{ResNet50-D} is \textbf{81.29}\(\%\).
  }\label{fig:strategy}
\end{figure}

\subsection{Improvements on Feature Transferability}\label{sec:transfer}

In this section, we study the feature transferability of the Grassmann class representation.
Following~\cite{kornblith2021why} on the study of better losses \emph{vs.} feature transferability, we compare GCR with five different losses and regularizations.
They are Softmax~\cite{bridle1990probabilistic}, Cosine Softmax~\cite{kornblith2021why}, Label Smoothing~\cite{szegedy2016rethinking} (with smooth value 0.1), Dropout~\cite{srivastava2014dropout} (with drop ratio 0.3), and the Sigmoid~\cite{beyer2020we} binary cross-entropy loss.
Note that baselines in Tab.~\ref{tab:arch} that do not demonstrate competitive classification performances are not listed here.
The feature transfer benchmark dataset includes CIFAR-10~\cite{krizhevsky2009learning}, CIFAR-100~\cite{krizhevsky2009learning}, Food-101~\cite{bossard2014food}, Oxford-IIIT Pets~\cite{parkhi2012cats}, Stanford Cars~\cite{krause2013collecting}, and Oxford 102 Flowers~\cite{nilsback2008automated}.
All models are pre-trained on the ImageNet-1K dataset with the same training procedure as shown in the second row of Tab.~\ref{tab:arch}.
When testing on the transferred dataset, features (before the classification fc and Grassmann fc) of pre-trained networks are extracted.
We fit linear SVMs with the one-vs-rest multi-class policy on each of the training sets and report their top-1 accuracies or mean class accuracies (for Pets and Flowers) on their test set.
The regularization parameter for SVM is grid searched with candidates \([0.1, 0.2, 0.5, 1, 2, 5, 10, 15, 20]\) and determined by five-fold cross-validation on the training set.

\begin{table*}[t]
  \caption{Linear transfer using SVM for different losses.
    ResNet50-D is used as the backbone, and model weights are pre-trained on ImageNet-1K.
    \emph{Variability} measures the intra-class variability, and \(R^2\) measures class separation.
  }\label{tab:transfer}
  \centering
  \small
  \begin{tabular}{l@{}c|cc|c@{}c|c@{}c@{}ccccc}
    \toprule
    \multicolumn{2}{c}{\textbf{Setting}}        & \multicolumn{2}{c}{\textbf{ImageNet}} & \multicolumn{2}{c}{\textbf{Analysis}} & \multicolumn{6}{c}{\textbf{Linear Transfer (SVM)}}                                                                                                                          \\
    Name                                        & \(k\)                                 & Top-1                                 & Top-5                                              & Variability~ & \(R^2\)   & CIFAR10~  & CIFAR100~~ & ~Food     & Pets      & Cars      & Flowers   & \textbf{Avg.}      \\
    \cmidrule(r){1-2}\cmidrule(lr){3-4}\cmidrule(lr){5-6}\cmidrule(lr){7-12}\cmidrule(l){13-13}
    Softmax~\cite{bridle1990probabilistic}      &                                       & \(78.04\)                             & \(93.89\)                                          & \(60.12\)    & \(0.495\) & \(90.79\) & \(67.76\)  & \(72.13\) & \(92.49\) & \(51.55\) & \(93.17\) & \(77.98\)          \\
    CosineSoftmax~\cite{kornblith2021why}       &                                       & \(78.30\)                             & \(94.07\)                                          & \(56.87\)    & \(0.528\) & \(89.34\) & \(65.32\)  & \(64.79\) & \(91.68\) & \(43.92\) & \(87.28\) & \(73.72\)          \\
    LabelSmoothing~\cite{szegedy2016rethinking} &                                       & \(78.07\)                             & \(94.10\)                                          & \(54.79\)    & \(0.577\) & \(89.14\) & \(63.22\)  & \(66.02\) & \(91.72\) & \(43.58\) & \(91.01\) & \(74.12\)          \\
    Dropout~\cite{srivastava2014dropout}        &                                       & \(77.92\)                             & \(93.80\)                                          & \(55.40\)    & \(0.565\) & \(89.27\) & \(64.33\)  & \(66.74\) & \(91.38\) & \(43.99\) & \(88.59\) & \(74.05\)          \\
    Sigmoid~\cite{beyer2020we}                  &                                       & \(78.04\)                             & \(93.81\)                                          & \(60.20\)    & \(0.491\) & \(91.09\) & \(69.26\)  & \(71.71\) & \(91.98\) & \(51.75\) & \(92.86\) & \(78.11\)          \\
    \cmidrule(r){1-2}\cmidrule(lr){3-4}\cmidrule(lr){5-6}\cmidrule(lr){7-12}\cmidrule(l){13-13}
    \multirow{5}{*}{GCR (Ours)}                 & 1                                     & \(78.42\)                             & \(94.14\)                                          & \(56.50\)    & \(0.534\) & \(89.98\) & \(66.34\)  & \(64.34\) & \(91.37\) & \(42.97\) & \(86.85\) & \(73.64\)          \\
                                                & 4                                     & \(78.68\)                             & \(94.32\)                                          & \(61.48\)    & \(0.459\) & \(90.56\) & \(67.45\)  & \(67.58\) & \(91.37\) & \(50.24\) & \(90.08\) & \(76.21\)          \\
                                                & 8                                     & \(\mathbf{79.26}\)                    & \(\mathbf{94.44}\)                                 & \(63.49\)    & \(0.430\) & \(90.13\) & \(67.90\)  & \(70.06\) & \(91.85\) & \(53.25\) & \(92.64\) & \(77.64\)          \\
                                                & 16                                    & \(79.21\)                             & \(94.37\)                                          & \(65.79\)    & \(0.395\) & \(91.09\) & \(69.58\)  & \(71.28\) & \(91.99\) & \(55.93\) & \(93.80\) & \(78.95\)          \\
                                                & 32                                    & \(78.63\)                             & \(94.05\)                                          & \(67.74\)    & \(0.365\) & \(91.35\) & \(69.49\)  & \(71.80\) & \(92.47\) & \(58.05\) & \(95.04\) & \(\mathbf{79.70}\) \\
    \bottomrule
  \end{tabular}
\end{table*}

\begin{table}[t]
  \caption{Feature transfer using Swin-T and Deit3-S.
    All model weights are pre-trained on ImageNet-1K as in Tab.~\ref{tab:arch}.
    \emph{C10/100} is CIFAR10/100, \emph{Flwr} is Flowers.
    \emph{Swin-T GCR} and \emph{Deit3-S GCR} are their Grassmann variants.
  }\label{tab:transfer2}
  \centering
  \small
  \setlength\tabcolsep{2 pt}
  \begin{tabular}{l@{}ccccccccc}
    \toprule
    \multicolumn{1}{c}{\textbf{Setting}} & \multicolumn{2}{c}{\textbf{Analysis}} & \multicolumn{7}{c}{\textbf{Linear Transfer (SVM)}}                                                                                                                                                 \\
    Architecture~                        & Vari.\@                               & \(R^2\)                                            & C10      & \resizebox{0.92\width}{\height}{C100} & \resizebox{0.93\width}{\height}{Food} & Pets     & Cars     & Flwr     & \textbf{Avg.}     \\
    \cmidrule(r){1-1}\cmidrule(lr){2-3}\cmidrule(lr){4-9}\cmidrule(l){10-10}
    Swin-T                               & 60.2                                  & 0.48                                               & \(92.7\) & \(69.4\)                              & \(77.5\)                              & \(92.1\) & \(61.3\) & \(96.0\) & \(81.5\)          \\
    Swin-T GCR                           & 62.9                                  & 0.40                                               & \(93.5\) & \(71.5\)                              & \(79.8\)                              & \(93.3\) & \(65.5\) & \(97.0\) & \(\mathbf{83.4}\) \\
    \cmidrule(r){1-1}
    Deit3-S                              & 50.6                                  & 0.60                                               & \(89.5\) & \(63.7\)                              & \(64.7\)                              & \(91.4\) & \(43.1\) & \(90.2\) & \(73.8\)          \\
    Deit3-S GCR                          & 61.5                                  & 0.44                                               & \(93.0\) & \(71.9\)                              & \(74.9\)                              & \(92.3\) & \(60.7\) & \(95.5\) & \(\mathbf{81.4}\) \\
    \bottomrule
  \end{tabular}
\end{table}

\paragraph{Results}

The validation accuracies of different models on ImageNet-1K are listed in the second group of columns in Tab.~\ref{tab:transfer}.
All GCR models (\(k=1, 4, 8, 16, 32\)) achieve higher top-1 and top-5 accuracies than all the baseline methods with different losses or regularizations.
Within a suitable range, a larger subspace dimension \(k\) improves the accuracy greater.
However, when the subspace dimension is beyond 16, the top-1 accuracy begins to decrease.
When \(k=32\), the top-1 is \(78.63\%\), which is still \(0.33\%\) higher than the best classification baseline CosineSoftmax.

The linear transfer results are listed in the fourth group of columns in Tab.~\ref{tab:transfer}.
Among the baseline methods, we find that Softmax and Sigmoid have the highest average linear transfer accuracies, which are \(77.98\%\) and \(78.11\%\), respectively.
Other losses demonstrate worse transfer performance than Softmax.
For the Grassmann class representation, we observe a monotonic increase in average transfer accuracy when \(k\) increases from 1 to 32.
When \(k=1\), the cosine softmax and the GCR have both comparable classification accuracies and comparable transfer performance.
This can attribute to their resemblances in the formula.
The transfer accuracy of GCR (73.64\%) is lower than Softmax (77.98\%) at this stage.
Nevertheless, when the subspace dimension \(k\) increases, the linear transfer accuracy gradually improves, and when \(k=8\), the transfer performance (77.64\%) is on par with the Softmax.
When \(k\ge16\), the transfer performance surpasses all the baselines.

In Tab.~\ref{tab:transfer2}, we show that features of the GCR version of Swin-T and Deit3 increase the average transfer accuracy by \(1.9\%\) and \(7.6\%\), respectively.

\paragraph{Intra-Class Variability Increases with Dimension}

The intra-class variability is measured by first computing the mean pairwise angles (in degrees) between features within the same class and then averaging over classes.
Following the convention in the study of neural collapse~\cite{papyan2020prevalence}, the global-centered training features are used.
\cite{kornblith2021why} showed that alternative objectives which may improve accuracy over Softmax by collapsing the intra-class variability (see the \emph{Variability} column in Tab.~\ref{tab:transfer}), degrade the quality of features on downstream tasks.
Except for the Sigmoid, which has a similar intra-class variability (60.20) to Softmax (60.12), all other losses, including CosineSoftmax, LabelSmoothing, and Dropout, have smaller feature variability within classes (in the range from 54.79 to 56.87).
However, the above conclusion does not apply when the classes are modeled by subspaces.
For Grassmann class representation, we observed that if \(k\) is not extremely large, then \emph{as \(k\) increases, both the top-1 accuracy and the intra-class variability grow.}
This indicates that
representing classes as subspaces enables the simultaneous improvement of inter-class discriminability and intra-class variability.

This observation is also in line with the class separation index \(R^2\).
\(R^2\) is defined as one minus the ratio of the average intra-class cosine distance to the overall average cosine distance \cite[Eq.~(11)]{kornblith2021why}.
\cite{kornblith2021why} founds that greater class separation \(R^2\) is associated with less transferable features.
Tab.~\ref{tab:transfer} shows that when \(k\) increases, the class separation monotonically decreases, and the transfer performance grows accordingly.

\subsection{Design Choices and Analyses}\label{sec:analysis}

In this section, we use experiments to support our design choices and provide visualizations for the principal angles between class representative spaces.

\paragraph{Choice of Gamma}

In Tab.~\ref{tab:gamma}, we give more results with different values of \(\gamma\) when subspace dimension \(k=8\).
We find \(\gamma=25\) has good performance and use it throughout the paper without further tuning.

\begin{table}[t]
  \caption{
    Validation accuracy of Grassmann ResNet50-D on ImageNet-1K with varying \(\gamma\).
  }\label{tab:gamma}
  \centering
  \small
  \begin{tabular}{lcccc}
    \toprule
    Setting                         & \(k\)              & \(\gamma\) & Top1                   & Top5                   \\
    \cmidrule(r){1-3}\cmidrule(l){4-5}
    \multirow{3}{*}{ResNet50-D GCR} & \multirow{3}{*}{8} & \(20\)     & \(79.11\)              & \(94.29\)              \\
                                    &                    & \(25\)     & \(\boldsymbol{79.26}\) & \(\boldsymbol{94.44}\) \\
                                    &                    & \(30\)     & \(78.47\)              & \(94.07\)              \\
    \bottomrule
  \end{tabular}
\end{table}

\paragraph{Importance of Normalizing Features}

Normalizing the feature in \Eqref{eq:newlogit} is critical to the effective learning of the Grassmann class representations.
In Tab.~\ref{tab:renormalize} we compare results with/without feature normalization and observed a significant performance drop without normalization.

\begin{table}[t]
  \caption{
    Validation accuracy of Grassmann ResNet50-D on ImageNet with/without feature normalization.
  }\label{tab:renormalize}
  \centering
  \small
  \setlength\tabcolsep{5 pt}
  \begin{tabular}{lccccc}
    \toprule
    Setting                         & \(k\)              & Feature Normalize & Top1                   & Top5                   \\
    \cmidrule(r){1-3}\cmidrule(l){4-5}
    \multirow{2}{*}{ResNet50-D GCR} & \multirow{2}{*}{1} &                   & \(77.91\)              & \(93.78\)              \\
                                    &                    & \checkmark        & \(\boldsymbol{78.42}\) & \(\boldsymbol{94.14}\) \\
    \cmidrule(r){1-3}\cmidrule(l){4-5}
    \multirow{2}{*}{ResNet50-D GCR} & \multirow{2}{*}{8} &                   & \(78.12\)              & \(93.90\)              \\
                                    &                    & \checkmark        & \(\boldsymbol{79.26}\) & \(\boldsymbol{94.44}\) \\
    \bottomrule
  \end{tabular}
\end{table}

\paragraph{Principal Angles Between Class Representative Spaces}

When classes are subspaces, relationships between classes can be measured by \(k\) \emph{principal angles}, which contain richer information than a single angle between two class vectors.
The principal angles between two \(k\)-dimensional subspaces \(S\) and \(R\) are recursively defined as,
\begin{equation}
  \begin{aligned}
    \cos(\theta_i) & = \max_{\vs\in S}\max_{\vr\in R} \vs^T \vr = \vs_i^T \vr_i, \\ s.t. \|\vs\|=\|\vr\|=1,&~\vs^T\vs_j= \vr^T\vr_j=0, j\le i-1,
  \end{aligned}
\end{equation}
for \(i=1, \dots, k\) and \(\theta_i\in[0,\pi/2]\).
In Fig.~\ref{fig:principalangles}, we illustrate the smallest and largest principal angles between any pair of classes for a model with \(k=8\).
From the figure, we can see that the smallest principal angle reflects class similarity, and the largest principal angle is around \(\pi/2\).
A smaller angle means the two classes are correlated in some direction,
and a \(\pi/2\) angle means that some directions in one class subspace are completely irrelevant (orthogonal) to the other class.

\begin{figure}[t]
  \centering
  \makebox[0\textwidth][c]{
    \begin{subfigure}[t]{0.34\linewidth}
      \centering
      \includegraphics[height=3cm]{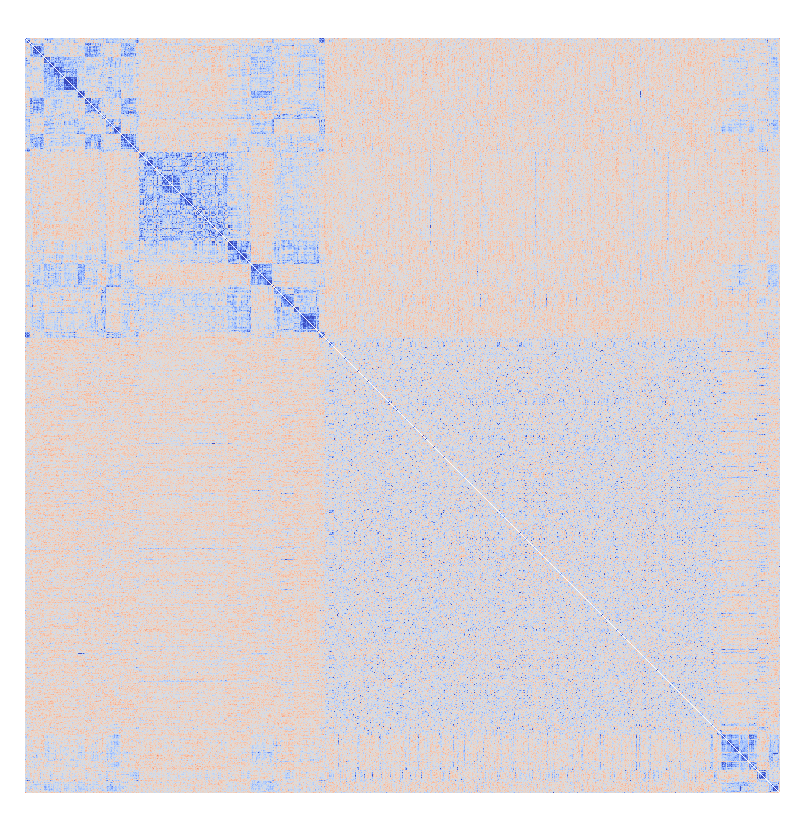}
      \caption{~}\label{fig:principalangle1}
    \end{subfigure}%
    \begin{subfigure}[t]{0.33\linewidth}
      \centering
      \includegraphics[height=3cm]{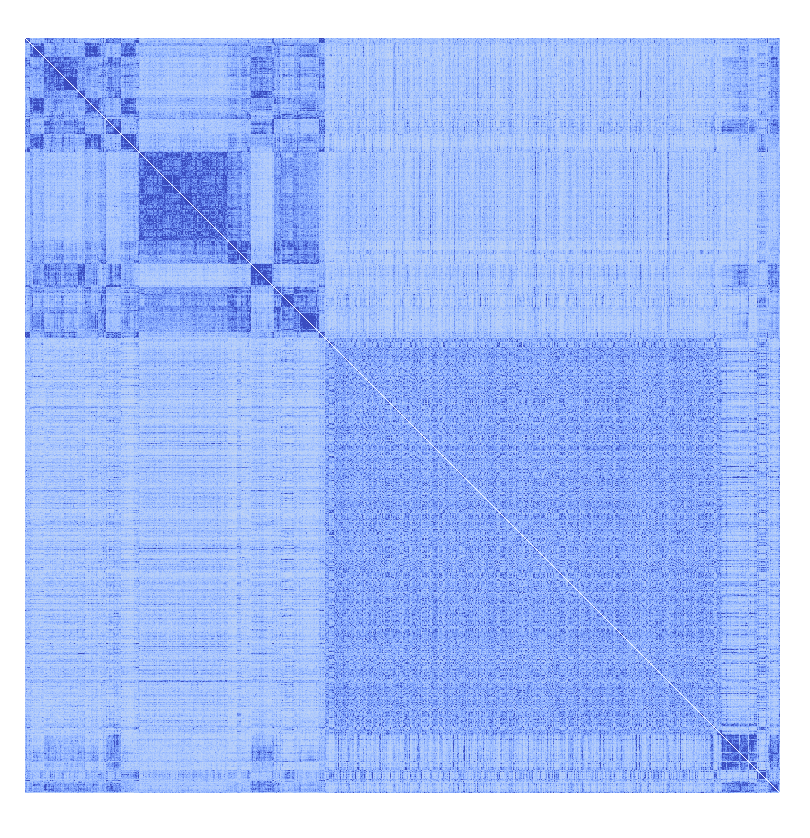}
      \caption{~}\label{fig:principalangle2}
    \end{subfigure}
    \begin{subfigure}[t]{0.32\linewidth}
      \centering
      \includegraphics[height=3cm]{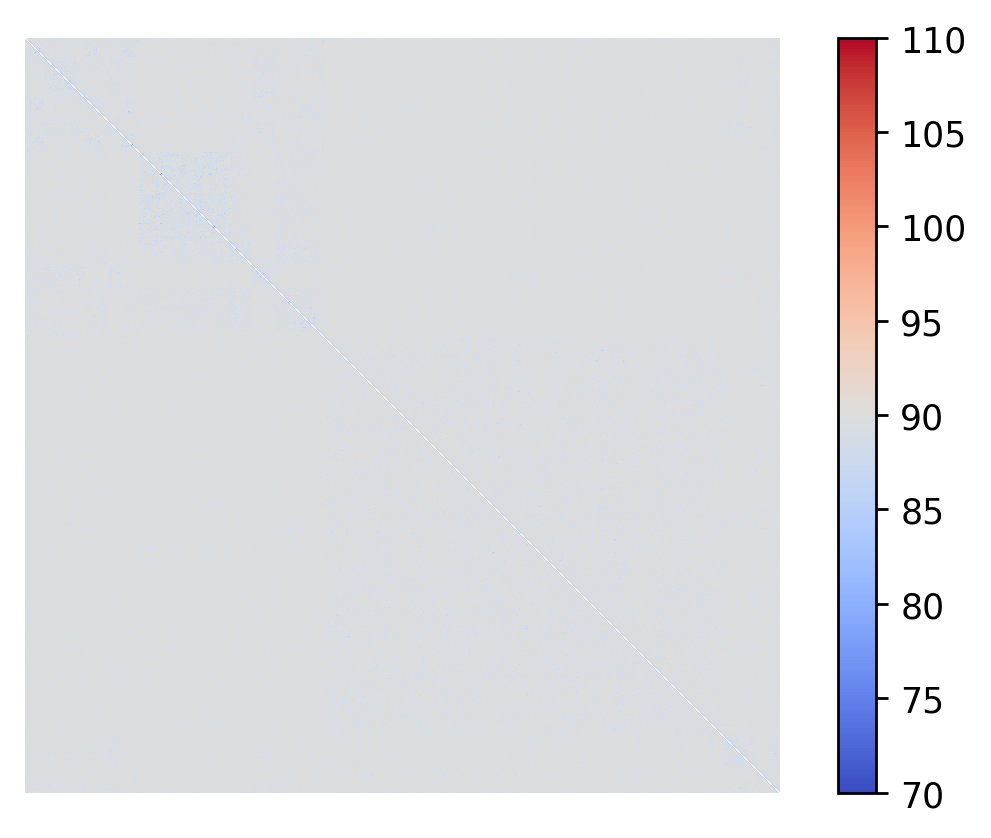}
      \caption{~}\label{fig:principalangle3}
    \end{subfigure}
  }
  \vspace{-0.5em}
  \caption{\small
    Each sub-figure is a heatmap of \(1000\times1000\) grids.
    The color at the \(i\)-th row and the \(j\)-th column represent an angle between class \(i\) and class \(j\) in ImageNet-1K.
    (a) Pairwise angles between class vectors of the ResNet50-D trained by vanilla softmax.
    Grids with red hue is large than \(90^{\circ}\), and blue hue means smaller than \(90^{\circ}\).
    (b) Pairwise smallest principal angles between 8-dimensional class subspaces of a ResNet50-D model.
    Deeper blue colors indicate smaller angles.
    (c) Pairwise largest principal angles of the same model as in (b).
    Grayish color means they are close to \(90^{\circ}\).
    Best viewed on screen with colors.
  }\label{fig:principalangles}
  \vspace{-0.5em}
\end{figure}

\paragraph{Necessity of Geometric Optimization}

To investigate the necessity of constraining the subspace parameters to lie in the Grassmannian, we replace the Riemannian SGD with the vanilla SGD and compare it with Riemannian SGD\@.
Note that with SGD, the logit formula \(\lVert\mS_i^T x\rVert\) no longer means the projection norm because \(\mS_i\) is not guaranteed to be orthonormal anymore.
With vanilla SGD, we get top-1 \(78.55\%\) and top-5 \(94.18\%\) when \(k=8\).
The top-1 is \(0.71\%\) lower than models trained by Riemannian SGD.

\section{Limitation and Future Direction}\label{sec:limitation}

Firstly, a problem that remains open is how to choose the optimal dimension.
Currently, we treat it as a hyper-parameter and decide it empirically.
Secondly, we showed that the Grassmann class representation \emph{allows for} greater intra-class variability.
Given this, it is attractive to explore extensions to \emph{explicitly promote} intra-class variability.
For example, a promising approach is to combine it with self-supervised learning.
We hope our work would stimulate progresses in this direction.

\section{Conclusion}\label{sec:conclusion}

In this work, we proposed the Grassmann class representation as a drop-in replacement of the conventional vector class representation.
Classes are represented as high-dimensional subspaces and the geometric structure of the corresponding Grassmann fully-connected layer is the product of Grassmannians.
We optimize the subspaces using the optimization and provide an efficient Riemannian SGD implementation tailored for Grassmannians.
Extensive experiments demonstrate that the new Grassmann class representation is able to improve classification accuracies on large-scale datasets and boost feature transfer performances at the same time.

  {\small
    \bibliographystyle{ieee_fullname}
    \bibliography{bib}
  }

\newpage

\appendix

\section{An Alternative Form of Riemannian SGD}\label{sec:retraction}

As discussed in Section~4.2, an important ingredient of the geometric optimization algorithm is to move a point in the direction of a tangent vector while staying on the manifold (\emph{e.g.}, see the example in Fig.~1).
This is accomplished by the \emph{retraction} operation (please refer to~\cite[Section~4.1]{absil2009optimization} for its mathematical definition).
In the fourth step of Alg.~1, the retraction is implemented by computing the geodesic curve on the manifold that is tangent to the vector \(\mM^{(t)}\).
An alternative implementation of retraction other than moving parameters along the geodesic is to replace step 4 with the Euclidean gradient update \(\mS^{(t+1)} \gets \mS^{(t)} + \tau\mM^{(t)} \) and then follow by the orthogonalization described in step~5.
In this case, step~5 is not optional anymore since \(\mS^{(t+1)}\) will move away from the Grassmannian after the Euclidean gradient update.
The orthogonalization pulls \(\mS^{(t+1)}\) back to the Grassmann manifold.
For ease of reference, we call this version of Riemannian SGD as \emph{Alg.~1 variant}.
We compare the two implementations in Tab.~\ref{tab:alg1variant}.
The results show that the Grassmann class representation is effective on both versions of Riemannian SGD.
We choose Alg.~1 because it is faster than the Alg.~1 variant.
The thin SVD used in Equ.~(3) can be efficiently computed via the \emph{gesvda} approximate algorithm provided by the cuSOLVER library, which is faster than a QR decomposition on GPUs (see Tab.~\ref{tab:svdtime}).

\begin{table}[h]
  \caption{
    Validation accuracy of Grassmann ResNet50-D on ImageNet with different retractions.
    The first row uses the exponential map, \emph{i.e.,} moving along geodesics, as retraction, while the second row uses the Q factor of QR decomposition, \emph{i.e.,} the \(\qf\) function, as retraction.
  }\label{tab:alg1variant}
  \centering
  \small
  \setlength{\tabcolsep}{5pt}
  \begin{tabular}{lllll}
    \toprule
    Setting        & Optimizer & Retraction & Top1                   & Top5                   \\
    \cmidrule(r){1-3}\cmidrule(l){4-5}
    Alg.~1         & RSGD+SGD  & Geodesic   & \(\boldsymbol{79.26}\) & \(\boldsymbol{94.44}\) \\
    Alg.~1 Variant & RSGD+SGD  & \(\qf\)    & \(79.13\)              & \(94.45\)              \\
    \bottomrule
  \end{tabular}
\end{table}

\section{Details on Step 5 of Algorithm~1}

The numerical inaccuracy is caused by the accumulation of tiny computational errors of Equ.~(3).
After running many iterations, the matrix \(\mS\) might not be perfectly orthogonal.
For example, after 100, 1000, and 5000 iterations of the Grassmannian ResNet50-D with subspace dimension \(k=8\), we observed that the error \(\max{}_i\lVert\mS_i^T\mS_i-\mI\rVert{}_\infty\) is 1.9e-5, 9.6e-5 and 3.7e-4, respectively.
After 50 epochs, the error accumulates to 0.0075.
So, we run step 5 every 5 iterations to keep both the inaccuracies and the extra computational cost at a low level at the same time.

\section{The Importance of Joint Training}

The joint training of the class subspaces and the features is essential.
To support this claim, we add an experiment (first row of Tab.~\ref{tab:finetune}) that only fine-tunes the class subspaces from weights pre-trained using the regular softmax.
We find that if the feature is fixed, changing the regular fc to the geometric version does not increase performance noticeably (top-1 from 78.04\% of the regular softmax version to 78.14\% of the Grassmann version).
For comparison, we also add another experiment that fine-tunes all parameters (second row of Tab.~\ref{tab:finetune}).
But when all parameters are free to learn, the pre-trained weights provide a good initialization that boosts the top-1 to 79.44\%.

\begin{table}[t]
  \caption{
    Validation accuracy of ResNet50-D on ImageNet trained with good initialization.
    Both rows use the weights of a ResNet50-D trained on ImageNet using the regular softmax.
    The first row fixes the backbone parameters, and solely learns the Grassmann fc layer, while the second row learns all parameters.
  }\label{tab:finetune}
  \centering
  \small
  \setlength\tabcolsep{3.4 pt}
  \begin{tabular}{llllcccc}
    \toprule
    Setting              & \(k\)              & Initialization                       & Fine-Tune  & Top1                   & Top5                   \\
    \cmidrule(r){1-4}\cmidrule(l){5-6}
    \multirow{2}{*}{GCR} & \multirow{2}{*}{8} & \multirow{2}{*}{Softmax pre-trained} & Last layer & \(78.14\)              & \(93.97\)              \\
                         &                    &                                      & All layers & \(\boldsymbol{79.44}\) & \(\boldsymbol{94.58}\) \\
    \bottomrule
  \end{tabular}
\end{table}

\section{Influence on Training Speed}

During training, the most costly operation in Alg.~1 is SVD\@.
The time of SVD and QR on typical matrix sizes encountered in an iteration of Alg.~1 is benchmarked in Tab.~\ref{tab:svdtime}.
We can see that (1) SVD (with gesvda solver) is faster QR decomposition, and (2), when the subspace dimension is no greater than 2, CPU is faster than GPU.
Based on these observations, in our implementation, we compute SVD on CPU when \(k\le2\) and on GPU in other cases.
Overall, when computing SVD, it adds roughly 5ms to 30ms overhead to the iteration time.

\begin{table}[t]
  \caption{
    SVD and QR time (in \emph{ms}) on Intel Xeon Gold 6146 (48) @ 3.201GHz and Nvidia V100 using PyTorch 1.13.1.
    The tested matrices are filled by Gaussian noises and have the shape of num classes \(\times\) feature dimension \(\times\) subspace dimension,
    which are the typical sizes encountered in Alg.~1 when training ImageNet-1K.
  }\label{tab:svdtime}
  \centering
  \small
  \begin{tabular}{lrrrr}
    \toprule
                                   & \multicolumn{2}{c}{\textbf{SVD}} & \multicolumn{2}{c}{\textbf{QR}}                         \\
    Tensor Shape                   & CPU                              & GPU                             & CPU       & GPU       \\
    \cmidrule(r){1-1}\cmidrule(lr){2-3}\cmidrule(l){4-5}
    \(1000 \times 2048 \times 1 \) & \(~~~~5.1\)                      & \(20.0\)                        & \(2.6\)   & \(36.5\)  \\
    \(1000 \times 2048 \times 2 \) & \(~~~~11.3\)                     & \(20.7\)                        & \(7.8\)   & \(46.0\)  \\
    \(1000 \times 2048 \times 4 \) & \(~~59.9\)                       & \(21.2\)                        & \(18.8\)  & \(56.3\)  \\
    \(1000 \times 2048 \times 8 \) & \(~~138.0\)                      & \(23.2\)                        & \(78.8\)  & \(70.9\)  \\
    \(1000 \times 2048 \times 16\) & \(352.9\)                        & \(25.2\)                        & \(200.5\) & \(118.5\) \\
    \(1000 \times 2048 \times 32\) & \(1020.1\)                       & \(30.8\)                        & \(626.8\) & \(224.6\) \\
    \bottomrule
  \end{tabular}
\end{table}

To measure the actual impact on training speed, we show the average iteration time (including a full forward pass and a full backward pass) of the vector class representation version \emph{vs.} the Grassmann class representation version on different network architectures in Fig.~\ref{fig:itertime}.
Overall, the Grassmann class representation adds about \(0.3\%\) (Deit3-S) to \(35.0\%\) (ResNet50-D) overhead.
The larger the model, and the large the batch size, the smaller the relative computational cost.

\begin{figure}[t]
  \centering
  \includegraphics[width=0.98\linewidth]{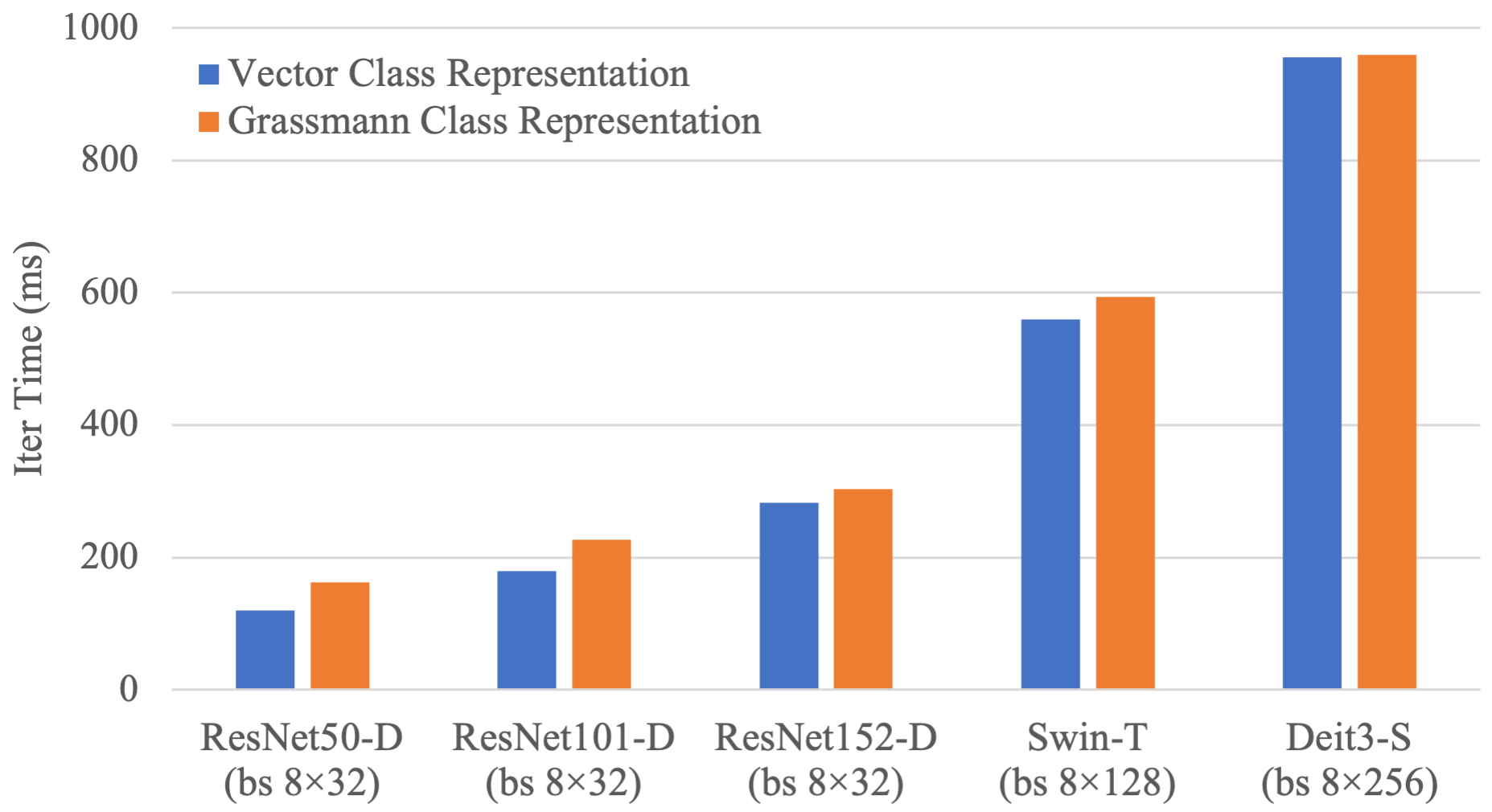}
  \caption{\small
    Compare the iteration time (in \emph{ms}) between vector class representation and Grassmann class representation (\(k=8\)) using different network architectures.
    Blue bars are networks with the original vector class representation and the orange bars are networks with the Grassmann class representation.
    The \emph{bs 8\(\times\)32} means that the batch size is 256, distributed across 8 GPUs and there are 32 samples per GPU.
  }\label{fig:itertime}
\end{figure}

\section{More Visualizations on Principal Angles}

Due to limited space, we only showed the visualization of the maximum and the minimum principal angles in Fig.~3.
Here, we illustrate all eight principal angles in the GCR (\(k=8\)) setting in Fig.~\ref{fig:8principalangles}.

\begin{figure}[t]
  \centering
  \makebox[0\textwidth][c]{
    \begin{subfigure}[t]{0.34\linewidth}
      \centering
      \includegraphics[height=3cm]{output0.png}
      \caption{~}
    \end{subfigure}%
    \begin{subfigure}[t]{0.33\linewidth}
      \centering
      \includegraphics[height=3cm]{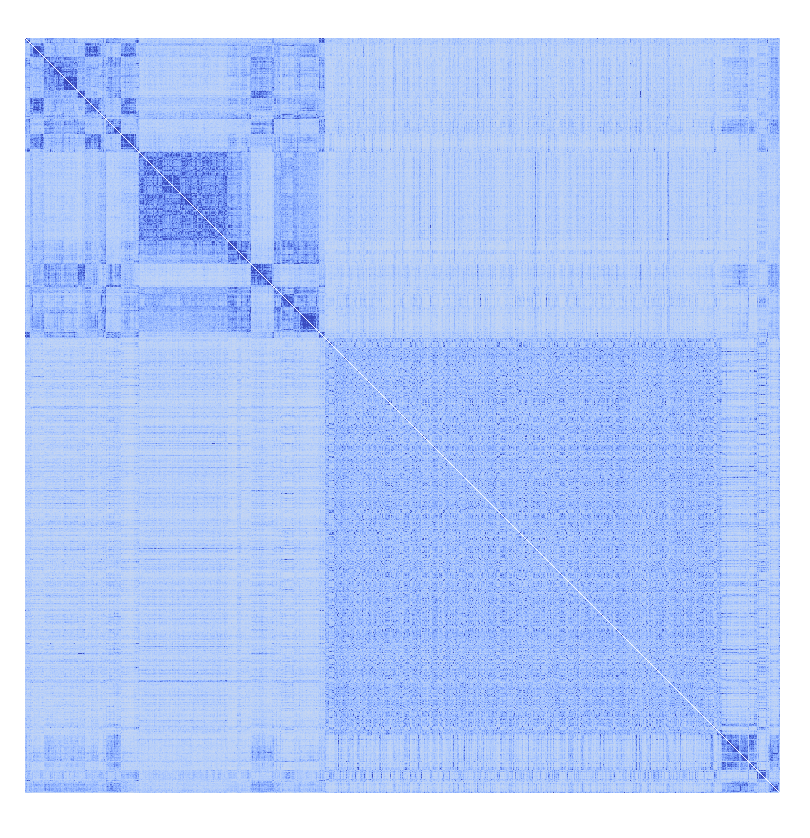}
      \caption{~}
    \end{subfigure}
    \begin{subfigure}[t]{0.32\linewidth}
      \centering
      \includegraphics[height=3cm]{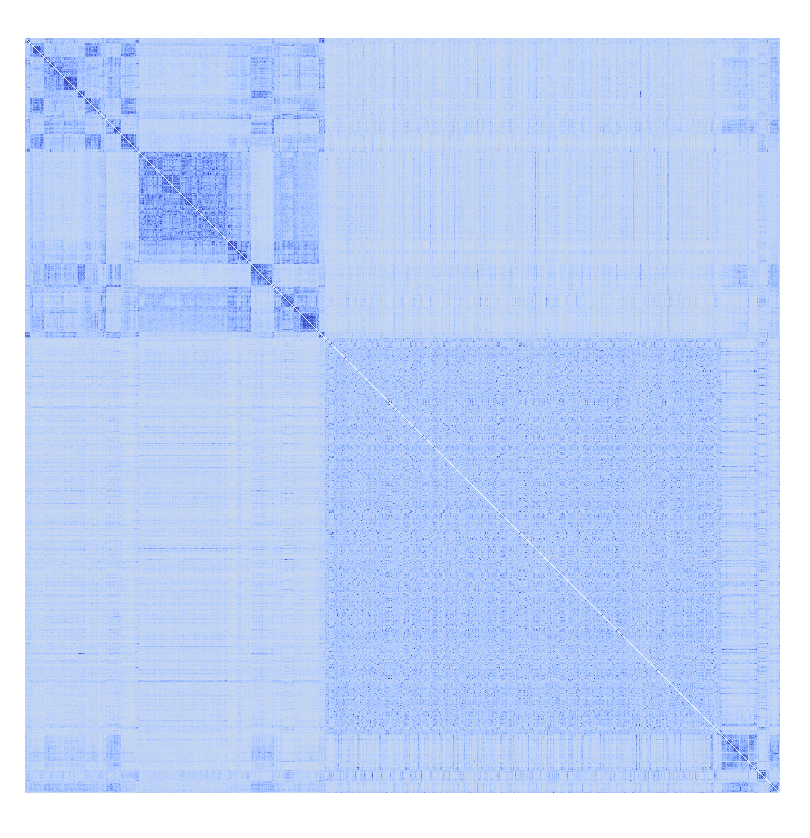}
      \caption{~}
    \end{subfigure}

  }
  \\
  \centering
  \makebox[0\textwidth][c]{
    \begin{subfigure}[t]{0.34\linewidth}
      \centering
      \includegraphics[height=3cm]{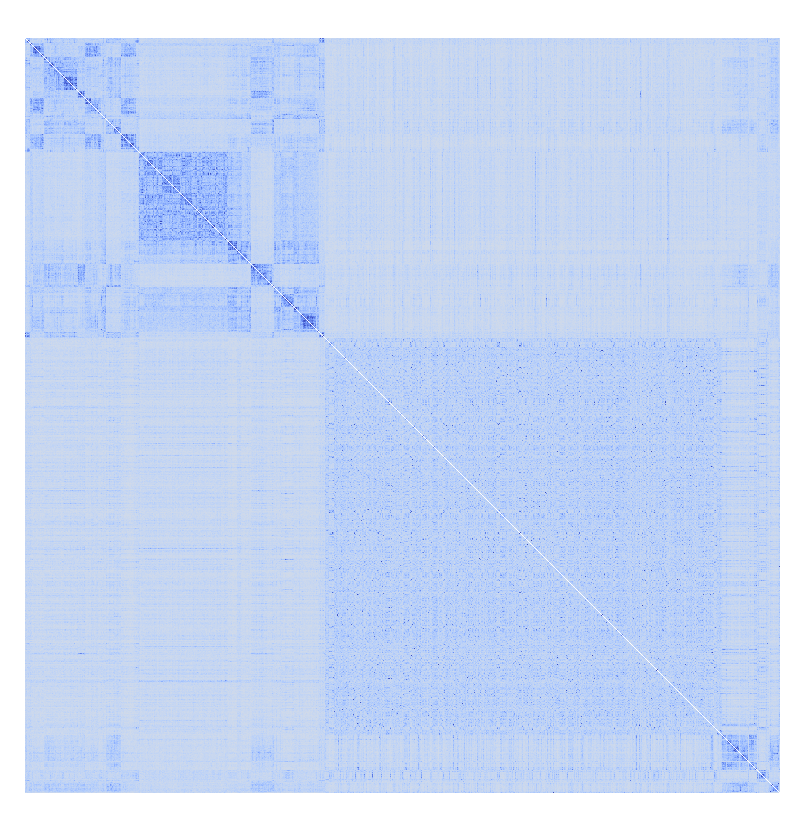}
      \caption{~}
    \end{subfigure}%
    \begin{subfigure}[t]{0.33\linewidth}
      \centering
      \includegraphics[height=3cm]{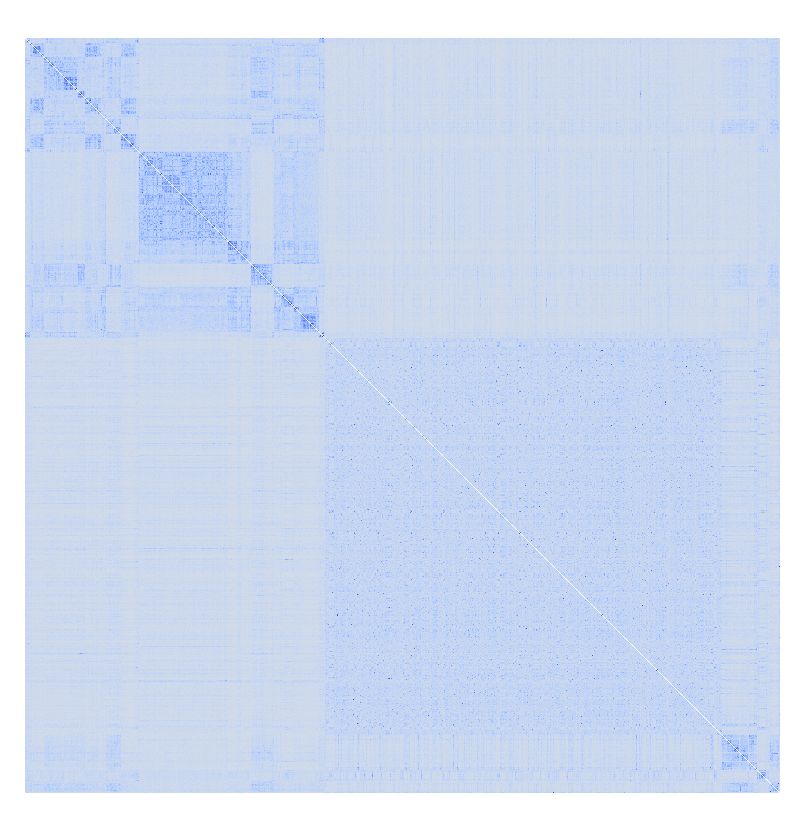}
      \caption{~}
    \end{subfigure}
    \begin{subfigure}[t]{0.32\linewidth}
      \centering
      \includegraphics[height=3cm]{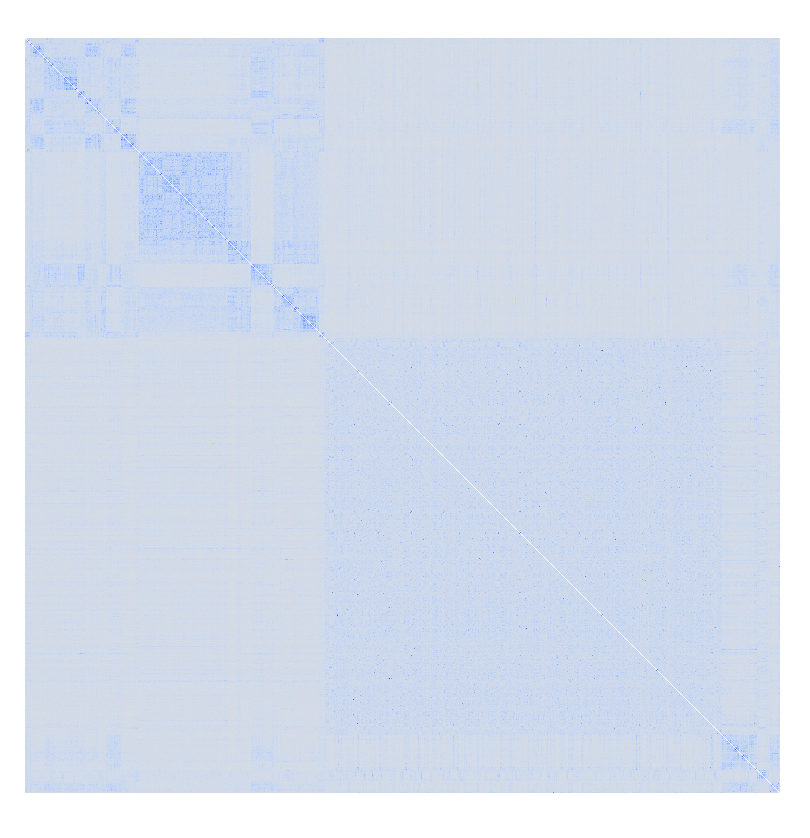}
      \caption{~}
    \end{subfigure}

  }
  \\
  \centering
  \makebox[0\textwidth][c]{
    \begin{subfigure}[t]{0.34\linewidth}
      \centering
      \includegraphics[height=3cm]{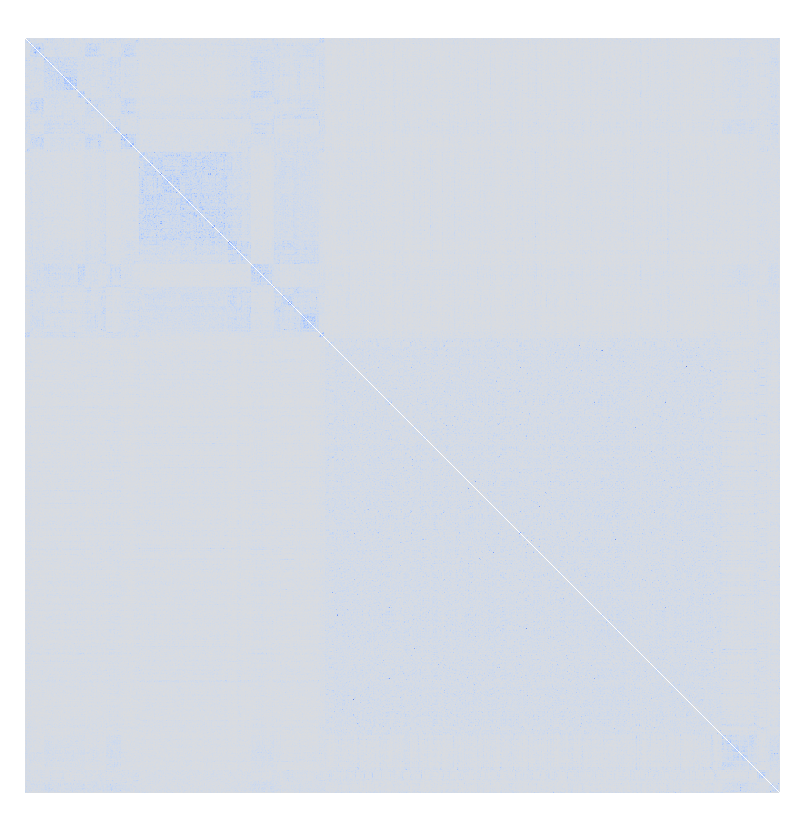}
      \caption{~}
    \end{subfigure}%
    \begin{subfigure}[t]{0.33\linewidth}
      \centering
      \includegraphics[height=3cm]{output-1.png}
      \caption{~}
    \end{subfigure}
    ~~~~~~~~~~~~~~~~~~~~~~~~~~~~~\,
  }
  \caption{\small
    Each sub-figure is a heatmap of \(1000\times1000\) grids.
    The color at the \(i\)-th row and the \(j\)-th column represent an angle between class \(i\) and class \(j\) in ImageNet-1K.
    Pairwise smallest
    (a)-(h) are pairwise principal angles between 8-dimensional class subspaces of a ResNet50-D model.
    (a) shows the smallest principal angels between any pair of classes;
    (b) shows the second smallest principal angles between any pair of classes;
    \emph{etc.}
    Deeper Blue colors mean that their angles are smaller.
    Grayish colors mean the angles are close to \(90^{\circ}\).
    Best viewed on screen with colors.
  }\label{fig:8principalangles}
\end{figure}

\section{Details on the Intra-Class Variability}

In Section~5.2, we introduced the intra-class variability which is defined as the mean pairwise angles (in degrees) between features within the same class and
then averaged over all classes.
For models trained on the ImageNet-1K, we randomly sampled 200K training samples and use their global-centered feature to compute the intra-class variability.
Suppose the set of global-centered features of class \(i\) is \(F_i\), then
\begin{equation}
  \variability \coloneqq \frac{1}{C\left|F_i\right|^2}\sum_{i=1}^{C}\sum_{\vx_j,\vx_k\in F_i}\angle(\vx_j, \vx_k)
\end{equation}
where \(C\) is the number of classes, \(\angle (\cdot,\cdot)\) is the angle (in degree) between two vectors,
and \(\left|F_i\right|\) is the cardinality of the set \(F_i\).

\section{Details on Transfer Datasets}

In this section, we give the details of the datasets that are used in the feature transferability experiments.
They are CIFAR-10~\cite{krizhevsky2009learning}, CIFAR-100~\cite{krizhevsky2009learning}, Food-101~\cite{bossard2014food}, Oxford-IIIT Pets~\cite{parkhi2012cats}, Stanford Cars~\cite{krause2013collecting}, and Oxford 102 Flowers~\cite{nilsback2008automated}.
The number of classes and the sizes of the training set and testing set are shown in Tab.~\ref{tab:trans_data}.

\begin{table}[t]
  \caption{
    Details of the transfer datasets.
    The number of classes, the size of the training set and the testing set (or the validation set if no testing set or label of the testing set is not available),
    and the metric used to report the accuracies.
  }\label{tab:trans_data}
  \centering
  \small
  \setlength{\tabcolsep}{1.5pt}
  \begin{tabular}{lccccc}
    \toprule
    Dataset                                         & Classes & Size (Train/Test)   & Accuracy       \\
    \midrule
    CIFAR-10~\cite{krizhevsky2009learning},         & \(10\)  & \(50000\)/\(10000\) & top-1          \\
    CIFAR-100~\cite{krizhevsky2009learning},        & \(10\)  & \(50000\)/\(10000\) & top-1          \\
    Food-101~\cite{bossard2014food}                 & \(101\) & \(75750\)/\(25250\) & top-1          \\
    Oxford-IIIT Pets~\cite{parkhi2012cats}          & \(37\)  & \(3680/3369\)       & mean per-class \\
    Stanford Cars~\cite{krause2013collecting}       & \(196\) & \(8144\)/\(8041\)   & top-1          \\
    Oxford 102 Flowers~\cite{nilsback2008automated} & \(102\) & \(6552\)/\(818\)    & mean per-class \\
    \bottomrule
  \end{tabular}
\end{table}

\section{Details on Linear SVM Hyperparameter}

In Tab.~3, we used five-fold cross-validation on the training set to determine the regularization parameter of the linear SVM.
The parameter is searched in the set \([0.1, 0.2, 0.5, 1, 2, 5, 10, 15, 20]\).
Tab.~\ref{tab:svmhyper} lists the selected regularization parameter of each setting.
Both the cross-validation procedure and the SVM are implemented using the \emph{sklearn} package.
As a pre-processing step, the features are divided by the average norm of the respective training set, so that SVMs are easier to converge.
The max iteration of SVM is set to 10,000.

\begin{table}[t]
  \caption{
    Regularization hyperparameter of SVM used in the linear feature transfer experiment.
    The hyperparameters are determined by five-fold cross-validation on the training sets.
    \emph{C10} means CIFAR10 and \emph{C100} means CIFAR100.
  }\label{tab:svmhyper}
  \centering
  \small
  \setlength{\tabcolsep}{1.5pt}
  \begin{tabular}{lccccccc}
    \toprule
    \multicolumn{2}{c}{\textbf{Setting}}        & \multicolumn{6}{c}{\textbf{Hyper-Parameter of SVM}}                                                            \\
    Name                                        & \(k\)                                               & C10~    & C100~~  & ~Food   & Pets    & Cars   & Flowers \\
    \cmidrule(r){1-2}\cmidrule(l){3-8}
    Softmax~\cite{bridle1990probabilistic}      &                                                     & \(10\)  & \(5\)   & \(10\)  & \(0.5\) & \(5\)  & \(10\)  \\
    CosineSoftmax~\cite{kornblith2021why}       &                                                     & \(1\)   & \(1\)   & \(1\)   & \(2\)   & \(1\)  & \(2\)   \\
    LabelSmoothing~\cite{szegedy2016rethinking} &                                                     & \(10\)  & \(5\)   & \(5\)   & \(2\)   & \(5\)  & \(10\)  \\
    Dropout~\cite{srivastava2014dropout}        &                                                     & \(5\)   & \(5\)   & \(10\)  & \(2\)   & \(15\) & \(5\)   \\
    Sigmoid~\cite{beyer2020we}                  &                                                     & \(10\)  & \(5\)   & \(5\)   & \(2\)   & \(10\) & \(15\)  \\
    \cmidrule(r){1-2}\cmidrule(l){3-8}
    \multirow{5}{*}{GCR (Ours)}                 & 1                                                   & \(1\)   & \(0.5\) & \(0.5\) & \(1\)   & \(1\)  & \(2\)   \\
                                                & 4                                                   & \(1\)   & \(0.5\) & \(0.5\) & \(5\)   & \(2\)  & \(5\)   \\
                                                & 8                                                   & \(1\)   & \(0.5\) & \(0.5\) & \(1\)   & \(1\)  & \(2\)   \\
                                                & 16                                                  & \(1\)   & \(0.5\) & \(0.5\) & \(1\)   & \(2\)  & \(5\)   \\
                                                & 32                                                  & \(1\)   & \(1\)   & \(0.5\) & \(2\)   & \(2\)  & \(2\)   \\
    \cmidrule(r){1-2}\cmidrule(l){3-8}
    Swin-T~\cite{liu2021swin}                   &                                                     & \(1\)   & \(1\)   & \(1\)   & \(1\)   & \(2\)  & \(5\)   \\
    Swin-T GCR                                  & 8                                                   & \(0.5\) & \(1\)   & \(1\)   & \(2\)   & \(2\)  & \(10\)  \\
    Deit3-S~\cite{touvron2022deit}              &                                                     & \(2\)   & \(2\)   & \(2\)   & \(2\)   & \(5\)  & \(10\)  \\
    Deit3-S GCR                                 & 8                                                   & \(1\)   & \(1\)   & \(0.5\) & \(0.5\) & \(2\)  & \(2\)   \\
    \bottomrule
  \end{tabular}
\end{table}

\section{Feature Transfer Using KNN}

In Tab.~3, we have tested the feature transferability using linear SVM\@.
Here we provide transfer results by KNN in Tab.~\ref{tab:knntransfer}.
The hyperparameter \(K\) in KNN is determined by five-fold cross-validation on the training set.
The candidate values are \(1, 3, 5, \dots, 49\).
Our GCR demonstrates the best performance both on both CNNs and Vision Transformers.
For the ResNet50-D backbone, Grassmann with \(k=32\) has a better performance in both classification accuracy and transferability than all the baseline methods.
On Swin-T, our method surpasses the original Swin-T by 2.76\% on average.
On Deit3-S, our method is 13.81\% points better than the original Deit3-S.
The experiments on KNN reinforced our conclusion that GCR improves large-scale classification accuracy and feature transferability simultaneously.

\begin{table*}[t]
  \caption{Linear transfer using KNN for different losses and different backbones.
    All model weights are pre-trained on ImageNet-1K.
    In the first two sections, ResNet50-D is used as the backbone, and in the last section, Swin-T and Deit3-S are tested.
  }\label{tab:knntransfer}
  \centering
  \small
  \begin{tabular}{l@{}c|c@{}c@{}ccccc}
    \toprule
    \multicolumn{2}{c}{\textbf{Setting}}        & \multicolumn{6}{c}{\textbf{Linear Transfer (KNN)}}                                                                                          \\
    Name                                        & \(k\)                                              & CIFAR10~  & CIFAR100~~ & ~Food     & Pets      & Cars      & Flowers   & \textbf{Avg.} \\
    \cmidrule(r){1-2}\cmidrule(lr){3-8}\cmidrule(l){9-9}
    Softmax~\cite{bridle1990probabilistic}      &                                                    & \(87.24\) & \(56.67\)  & \(57.62\) & \(90.46\) & \(27.99\) & \(84.23\) & \(67.37\)     \\
    CosineSoftmax~\cite{kornblith2021why}       &                                                    & \(85.09\) & \(51.40\)  & \(47.46\) & \(87.14\) & \(22.34\) & \(70.54\) & \(60.66\)     \\
    LabelSmoothing~\cite{szegedy2016rethinking} &                                                    & \(86.46\) & \(54.24\)  & \(52.63\) & \(90.60\) & \(24.56\) & \(79.22\) & \(64.62\)     \\
    Dropout~\cite{srivastava2014dropout}        &                                                    & \(86.43\) & \(53.83\)  & \(52.59\) & \(89.81\) & \(24.28\) & \(77.51\) & \(64.08\)     \\
    Sigmoid~\cite{beyer2020we}                  &                                                    & \(87.96\) & \(58.27\)  & \(57.47\) & \(90.54\) & \(27.22\) & \(84.47\) & \(67.66\)     \\
    \cmidrule(r){1-2}\cmidrule(lr){3-8}\cmidrule(l){9-9}
    \multirow{5}{*}{GCR (Ours)}                 & 1                                                  & \(86.65\) & \(52.72\)  & \(46.83\) & \(86.73\) & \(21.58\) & \(66.99\) & \(60.25\)     \\
                                                & 4                                                  & \(86.62\) & \(54.16\)  & \(51.34\) & \(88.28\) & \(26.39\) & \(73.11\) & \(63.32\)     \\
                                                & 8                                                  & \(86.84\) & \(55.64\)  & \(53.31\) & \(87.93\) & \(27.97\) & \(79.22\) & \(65.15\)     \\
                                                & 16                                                 & \(87.34\) & \(57.31\)  & \(55.26\) & \(89.64\) & \(29.64\) & \(84.60\) & \(67.30\)     \\
                                                & 32                                                 & \(86.96\) & \(56.39\)  & \(56.88\) & \(89.75\) & \(30.31\) & \(87.04\) & \(67.89\)     \\
    \cmidrule(r){1-2}\cmidrule(lr){3-8}\cmidrule(l){9-9}
    Swin-T~\cite{liu2021swin}                   &                                                    & \(90.59\) & \(59.27\)  & \(62.46\) & \(90.27\) & \(28.65\) & \(87.29\) & \(69.76\)     \\
    Swin-T GCR                                  & 8                                                  & \(91.38\) & \(62.04\)  & \(66.57\) & \(91.91\) & \(32.40\) & \(90.83\) & \(72.52\)     \\
    Deit3-S~\cite{touvron2022deit}              &                                                    & \(86.04\) & \(50.54\)  & \(45.47\) & \(88.12\) & \(18.22\) & \(63.69\) & \(58.68\)     \\
    Deit3-S GCR                                 & 8                                                  & \(91.64\) & \(63.80\)  & \(65.18\) & \(91.80\) & \(33.88\) & \(88.63\) & \(72.49\)     \\
    \bottomrule
  \end{tabular}
\end{table*}

\end{document}